\declaretheorem[name=Theorem,refname={Theorem,Theorems},Refname={Theorem,Theorems}]{theorem}
\declaretheorem[name=Lemma,refname={Lemma,Lemmas},Refname={Lemma,Lemmas},sibling=theorem]{lemma}
\declaretheorem[name=Corollary,refname={Corollary,Corollaries},Refname={Corollary,Corollaries},sibling=theorem]{corollary}
\newcommand{\cA}{\mathcal{A}}
\newcommand{\cN}{\mathcal{N}}
\newcommand{\realset}{\mathbb{R}}
\newcommand{\E}[1]{\mathbb{E} \left[#1\right]}
\newcommand{\condE}[2]{\mathbb{E} \left[#1 \,\middle|\, #2\right]}
\newcommand{\condprob}[2]{\mathbb{P} \left(#1 \,\middle|\, #2\right)}
\newcommand*\dif{\mathop{}\!\mathrm{d}}
\newcommand{\I}[1]{\mathds{1} \! \left\{#1\right\}}
\newcommand{\normw}[2]{\|#1\|_{#2}}
\newcommand{\set}[1]{\left\{#1\right\}}
\newcommand{\T}{^\top}
\DeclareMathOperator*{\argmax}{arg\,max\,}
\let\det\relax
\DeclareMathOperator{\det}{det}
\let\trace\relax
\DeclareMathOperator{\trace}{tr}
\mathchardef\mhyphen="2D
\newcommand{\bayesucb}{\ensuremath{\tt BayesUCB}\xspace}
\newcommand{\linucb}{\ensuremath{\tt LinUCB}\xspace}
\newcommand{\ucb}{\ensuremath{\tt UCB1}\xspace}
\title{Finite-Time Logarithmic Bayes Regret Upper Bounds}
\author{
  Alexia Atsidakou \\
  University of Texas, Austin
  \And
  Branislav Kveton \\
  AWS AI Labs$^*$
  \And
  Sumeet Katariya \\
  Amazon
  \And
  Constantine Caramanis \\
  University of Texas, Austin
  \And
  Sujay Sanghavi \\
  University of Texas, Austin / Amazon
}
\begin{document}

\maketitle

\begin{abstract}
We derive the first finite-time logarithmic Bayes regret upper bounds for Bayesian bandits. In a multi-armed bandit, we obtain $O(c_\Delta \log n)$ and $O(c_h \log^2 n)$ upper bounds for an upper confidence bound algorithm, where $c_h$ and $c_\Delta$ are constants depending on the prior distribution and the gaps of bandit instances sampled from it, respectively. The latter bound asymptotically matches the lower bound of Lai (1987). Our proofs are a major technical departure from prior works, while being simple and general. To show the generality of our techniques, we apply them to linear bandits. Our results provide insights on the value of prior in the Bayesian setting, both in the objective and as a side information given to the learner. They significantly improve upon existing $\tilde{O}(\sqrt{n})$ bounds, which have become standard in the literature despite the logarithmic lower bound of Lai (1987).
\end{abstract}

\section{Introduction}
\label{sec:introduction}

A \emph{stochastic multi-armed bandit} \citep{lai85asymptotically,auer02finitetime,lattimore19bandit} is an online learning problem where a \emph{learner} sequentially interacts with an environment over $n$ rounds. In each round, the learner takes an \emph{action} and receives its \emph{stochastic reward}. The goal of the learner is to maximize its expected cumulative reward over $n$ rounds. The mean rewards of the actions are unknown \emph{a priori} but can be learned by taking the actions. Therefore, the learner faces the \emph{exploration-exploitation dilemma}: \emph{explore}, and learn more about the actions; or \emph{exploit}, and take the action with the highest estimated reward. Bandits have been successfully applied to problems where uncertainty modeling and subsequent adaptation are beneficial. One example are recommender systems \citep{li10contextual,zhao13interactive,kawale15efficient,li16collaborative}, where the actions are recommended items and their rewards are clicks. Another example is hyper-parameter optimization \citep{li18hyperband}, where the actions are values of the optimized parameters and their reward is the optimized metric.

Cumulative regret minimization in stochastic bandits has been traditionally studied in two settings: frequentist \citep{lai85asymptotically,auer02finitetime,abbasi-yadkori11improved} and Bayesian \citep{gittins79bandit,tsitsiklis94short,lai87adaptive,russo14learning,russo18tutorial}. In the frequentist setting, the learner minimizes the regret with respect to a fixed unknown bandit instance. In the Bayesian setting, the learner minimizes the average regret with respect to bandit instances drawn from a prior distribution. The instance is unknown but the learner knows its prior distribution. The Bayesian setting allows surprisingly simple and insightful analyses of Thompson sampling. One fundamental result in this setting is that linear Thompson sampling \citep{russo14learning} has a comparable regret bound to \linucb in the frequentist setting \citep{abbasi-yadkori11improved,agrawal13thompson,abeille17linear}. Moreover, many recent meta- and multi-task bandit works \citep{bastani19meta,kveton21metathompson,basu21noregrets,simchowitz21bayesian,wang21multitask,hong22hierarchical,aouali23mixedeffect} adopt the Bayes regret to analyze the stochastic structure of their problems, that the bandit tasks are similar because their parameters are sampled i.i.d.\ from a task distribution.

Many bandit algorithms have frequentist regret bounds that match a lower bound. As an example, in a $K$-armed bandit with the minimum gap $\Delta$ and horizon $n$, the gap-dependent $O(K \Delta^{-1} \log n)$ regret bound of \ucb \citep{auer02finitetime} matches the gap-dependent $\Omega(K \Delta^{-1} \log n)$ lower bound of \citet{lai85asymptotically}. Moreover, the gap-free $\tilde{O}(\sqrt{K n})$ regret bound of \ucb matches, up to logarithmic factors, the gap-free $\Omega(\sqrt{K n})$ lower bound of \citet{auer95gambling}. The extra logarithmic factor in the $\tilde{O}(\sqrt{K n})$ bound can be eliminated by modifying \ucb \citep{audibert09minimax}. In contrast, and despite the popularity of the model, matching upper and lower bounds mostly do not exist in the Bayesian setting. Specifically, \citet{lai87adaptive} proved \emph{asymptotic} $c_h \log^2 n$ upper and lower bounds, where $c_h$ is a prior-dependent constant. However, all recent Bayes regret bounds are $\tilde{O}(\sqrt{n})$ \citep{russo14learning,russo16information,lu19informationtheoretic,hong20latent,kveton21metathompson}. This leaves open the question of finite-time logarithmic regret bounds in the Bayesian setting.

In this work, we answer this question positively and make the following contributions: 
\begin{enumerate}
  \item We derive the first finite-time logarithmic Bayes regret upper bounds for a Bayesian \emph{upper confidence bound (UCB)} algorithm. The bounds are $O(c_\Delta \log n)$ and $O(c_h \log^2 n)$, where $c_h$ and $c_\Delta$ are constants depending on the prior distribution $h$ and the gaps of random bandit instances sampled from $h$, respectively. The latter matches the lower bound of \citet{lai87adaptive} asymptotically. When compared to prior $\tilde{O}(\sqrt{n})$ bounds, we better characterize low-regret regimes, where the random gaps are large.
  \item To show the value of prior as a side information, we also derive a finite-time logarithmic Bayes regret upper bound for a frequentist UCB algorithm. The bound changes only little as the prior becomes more informative, while the regret bound for the Bayesian algorithm eventually goes to zero. The bounds match asymptotically when $n \to \infty$ and the prior is overtaken by data.
  \item To show the generality of our approach, we prove a $O(d \, c_\Delta \log^2 n)$ Bayes regret bound for a Bayesian linear bandit algorithm, where $d$ denotes the number of dimensions and $c_\Delta$ is a constant depending on random gaps. This bound also improves with a better prior.
  \item Our analyses are a major departure from all recent Bayesian bandit analyses, starting with \citet{russo14learning}. Roughly speaking, we first bound the regret in a fixed bandit instance, similarly to frequentist analyses, and then integrate out the random gap.
  \item We show the tightness of our bounds empirically and compare them to prior bounds. 
\end{enumerate}

This paper is organized as follows. In \cref{sec:setting}, we introduce the setting of Bayesian bandits. In \cref{sec:algorithm}, we present a Bayesian upper confidence bound algorithm called \bayesucb \citep{kaufmann12bayesian}. In \cref{sec:regret bounds}, we derive finite-time logarithmic Bayes regret bounds for \bayesucb, in both multi-armed and linear bandits. These are the first such bounds ever derived. In \cref{sec:prior work}, we compare our bounds to prior works and show that one matches an existing lower bound \citep{lai87adaptive} asymptotically. In \cref{sec:experiments}, we evaluate the bounds empirically. We conclude in \cref{sec:conclusions}.

\renewcommand{\thefootnote}{\fnsymbol{footnote}}
\footnotetext[1]{The work started at Amazon Search.}
\renewcommand{\thefootnote}{\arabic{footnote}}

\section{Setting}
\label{sec:setting}

We start with introducing our notation. Random variables are capitalized, except for Greek letters like $\theta$. For any positive integer $n$, we define $[n] = \set{1, \dots, n}$. The indicator function is $\I{\cdot}$. The $i$-th entry of vector $v$ is $v_i$. If the vector is already indexed, such as $v_j$, we write $v_{j, i}$. We denote the maximum and minimum eigenvalues of matrix $M \in \realset^{d \times d}$ by $\lambda_1(M)$ and $\lambda_d(M)$, respectively.

Our setting is defined as follows. We have a \emph{multi-armed bandit} \citep{lai85asymptotically,lai87adaptive,auer02finitetime,abbasi-yadkori11improved} with an \emph{action set} $\cA$. Each \emph{action} $a \in \cA$ is associated with a \emph{reward distribution} $p_a(\cdot; \theta)$, which is parameterized by an unknown \emph{model parameter} $\theta$ shared by all actions. The learner interacts with the bandit instance for $n$ rounds indexed by $t \in [n]$. In each round $t$, it takes an \emph{action} $A_t \in \cA$ and observes its \emph{stochastic reward} $Y_t \sim p_{A_t}(\cdot; \theta)$. The rewards are sampled independently across the rounds. We denote the mean of $p_a(\cdot; \theta)$ by $\mu_a(\theta)$ and call it the \emph{mean reward} of action $a$. The optimal action is $A_* = \argmax_{a \in \cA} \mu_a(\theta)$ and its mean reward is $\mu_*(\theta) = \mu_{A_*}(\theta)$. For a fixed model parameter $\theta$, the $n$-round \emph{regret} of a policy is defined as
\begin{align*}
  R(n; \theta)
  = \condE{\sum_{t = 1}^n \mu_*(\theta) - \mu_{A_t}(\theta)}{\theta}\,,
\end{align*}
where the expectation is taken over both random observations $Y_t$ and actions $A_t$. The \emph{suboptimality gap} of action $a$ is $\Delta_a = \mu_*(\theta) - \mu_a(\theta)$ 
and the \emph{minimum gap} is $\Delta_{\min} = \min_{a \in \cA \setminus \set{A_*}} \Delta_a$.

Two settings are common in stochastic bandits. In the \emph{frequentist} setting \citep{lai85asymptotically,auer02finitetime,abbasi-yadkori11improved}, the learner has no additional information about $\theta$ and its objective is to minimize the worst-case regret for any bounded $\theta$. We study the \emph{Bayesian} setting \citep{gittins79bandit,lai87adaptive,russo14learning,russo18tutorial}, where the model parameter $\theta$ is drawn from a \emph{prior distribution} $h$ that is given to the learner as a side information. The goal of the learner is to minimize the $n$-round \emph{Bayes regret} $R(n) = \E{R(n; \theta)}$, where the expectation is taken over the random model parameter $\theta \sim h$. Note that $A_*$, $\Delta_a$, and $\Delta_{\min}$ are random because they depend on the random instance $\theta$.

\section{Algorithm}
\label{sec:algorithm}

We study a Bayesian upper confidence bound algorithm called \bayesucb \citep{kaufmann12bayesian}. The algorithm was analyzed in the Bayesian setting by \citet{russo14learning}. The key idea in \bayesucb is to take the action with the highest UCB with respect to the posterior distribution of model parameter $\theta$. This differentiates it from frequentist algorithms, such as \ucb \citep{auer02finitetime} and \linucb \citep{abbasi-yadkori11improved}, where the UCBs are computed using a frequentist \emph{maximum likelihood estimate (MLE)} of the model parameter.

Let $H_t = (A_\ell, Y_\ell)_{\ell \in [t - 1]}$ be the \emph{history} of taken actions and their observed rewards up to round $t$. The \emph{Bayesian UCB} for the mean reward of action $a$ at round $t$ is
\begin{align}
  U_{t, a}
  = \mu_a(\hat{\theta}_t) + C_{t, a}\,,
  \label{eq:ucb}
\end{align}
where $\hat{\theta}_t$ is the \emph{posterior mean estimate} of $\theta$ at round $t$ and $C_{t, a}$ is a \emph{confidence interval width} for action $a$ at round $t$. The posterior distribution of model parameter $\theta$ is computed from a prior $h$ and history $H_t$ using Bayes' rule. The width is chosen so that $|\mu_a(\hat{\theta}_t) - \mu_a(\theta)| \leq C_{t, a}$ holds with a high probability conditioned on any history $H_t$. Technically speaking, $C_{t, a}$ is a half-width but we call it a width to simplify terminology.

\begin{algorithm}[t]
  \caption{\bayesucb}
  \label{alg:bayesucb}
  \begin{algorithmic}[1]
    \For{$t = 1, \dots, n$}
      \State Compute the posterior distribution of $\theta$ using prior $h$ and history $H_t$
      \For{each action $a \in \cA$}
        \State Compute $U_{t, a}$ according to \eqref{eq:ucb}
      \EndFor
      \State Take action $A_t \gets \argmax_{a \in \cA} U_{t, a}$ and observe its reward $Y_t$
    \EndFor
  \end{algorithmic}
\end{algorithm}

Our algorithm is presented in \cref{alg:bayesucb}. We instantiate it in a Gaussian bandit in \cref{sec:gaussian bandit}, in a Bernoulli bandit in \cref{sec:bernoulli bandit}, and in a linear bandit with Gaussian rewards in \cref{sec:linear bandit}. These settings are of practical interest because they lead to computationally-efficient implementations that can be analyzed due to closed-form posteriors \citep{lu19informationtheoretic,kveton21metathompson,basu21noregrets,wang21multitask,hong22hierarchical}. While we focus on deriving logarithmic Bayes regret bounds for \bayesucb, we believe that similar analyses can be done for Thompson sampling \citep{thompson33likelihood,chapelle11empirical,agrawal12analysis,agrawal13thompson,russo14learning,russo18tutorial}. This extension is non-trivial because a key step in our analysis is that the action with the highest UCB is taken (\cref{sec:technical novelty}).

\subsection{Gaussian Bandit}
\label{sec:gaussian bandit}

In a $K$-armed Gaussian bandit, the action set is $\cA = [K]$ and the model parameter is $\theta \in \realset^K$. Each action $a \in \cA$ has a Gaussian reward distribution, $p_a(\cdot; \theta) = \cN(\cdot; \theta_a, \sigma^2)$, where $\theta_a$ is its mean and $\sigma > 0$ is a known reward noise. Thus $\mu_a(\theta) = \theta_a$. The model parameter $\theta$ is drawn from a known Gaussian prior $h(\cdot)= \cN(\cdot; \mu_0, \sigma_0^2 I_K)$, where $\mu_0 \in \realset^K$ is a vector of prior means and $\sigma_0 > 0$ is a prior width.

The posterior distribution of the mean reward of action $a$ at round $t$ is $\cN(\cdot; \hat{\theta}_{t, a}, \hat{\sigma}_{t, a}^2)$, where
\begin{align*}
  \hat{\sigma}_{t, a}^2
  = (\sigma_0^{-2} + \sigma^{-2} N_{t, a})^{-1}
\end{align*}
is the posterior variance, $N_{t, a} = \sum_{\ell = 1}^{t - 1} \I{A_\ell = a}$ is the number of observations of action $a$ up to round $t$, and
\begin{align*}
  \hat{\theta}_{t, a}
  = \hat{\sigma}_{t, a}^2 \left(\sigma_0^{-2} \mu_{0, a} +
  \sigma^{-2} \sum_{\ell = 1}^{t - 1} \I{A_\ell = a} Y_\ell\right)
\end{align*}
is the posterior mean. This follows from a classic result, that the posterior distribution of the mean of a Gaussian random variable with a Gaussian prior is a Gaussian \citep{bishop06pattern}. The Bayesian UCB of action $a$ at round $t$ is $U_{t, a} = \hat{\theta}_{t, a} + C_{t, a}$, where $C_{t, a} = \sqrt{2 \hat{\sigma}_{t, a}^2 \log(1 / \delta)}$ is the confidence interval width and $\delta \in (0, 1)$ is a failure probability of the confidence interval.

\subsection{Bernoulli Bandit}
\label{sec:bernoulli bandit}

In a $K$-armed Bernoulli bandit, the action set is $\cA = [K]$ and the model parameter is $\theta \in \realset^K$. Each action $a \in \cA$ has a Bernoulli reward distribution, $p_a(\cdot; \theta) = \mathrm{Ber}(\cdot; \theta_a)$, where $\theta_a$ is its mean. Hence $\mu_a(\theta) = \theta_a$. Each parameter $\theta_a$ is drawn from a known prior $\mathrm{Beta}(\cdot; \alpha_a, \beta_a)$, where $\alpha_a > 0$ and $\beta_a > 0$ are positive and negative prior pseudo-counts, respectively.

The posterior distribution of the mean reward of action $a$ at round $t$ is $\mathrm{Beta}(\cdot; \alpha_{t, a}, \beta_{t, a})$, where
\begin{align*}
  \alpha_{t, a}
  = \alpha_a + \sum_{\ell = 1}^{t - 1} \I{A_\ell = a} Y_\ell\,, \quad
  \beta_{t, a}
  = \beta_a + \sum_{\ell = 1}^{t - 1} \I{A_\ell = a} (1 - Y_\ell)\,.
\end{align*}
This follows from a classic result, that the posterior distribution of the mean of a Bernoulli random variable with a beta prior is a beta distribution \citep{bishop06pattern}. The corresponding Bayesian UCB is $U_{t, a} = \hat{\theta}_{t, a} + C_{t, a}$, where
\begin{align*}
  \hat{\theta}_{t, a}
  = \frac{\alpha_{t, a}}{\alpha_{t, a} + \beta_{t, a}}\,, \quad
  C_{t, a}
  = \sqrt{\frac{\log(1 / \delta)}{2 (\alpha_{t, a} + \beta_{t, a} + 1)}}
  = \sqrt{\frac{\log(1 / \delta)}{2 (\alpha_a + \beta_a + N_{t, a} + 1)}}\,,
\end{align*}
denote the posterior mean and confidence interval width, respectively, of action $a$ at round $t$; and $\delta \in (0, 1)$ is a failure probability of the confidence interval. The confidence interval is derived using the fact that $\mathrm{Beta}(\cdot; \alpha_{t, a}, \beta_{t, a})$ is a sub-Gaussian distribution with variance proxy $\frac{1}{4 (\alpha_a + \beta_a + N_{t, a} + 1)}$ \citep{marchal17subgaussianity}.

\subsection{Linear Bandit with Gaussian Rewards}
\label{sec:linear bandit}

We also study linear bandits \citep{dani08stochastic,abbasi-yadkori11improved} with a finite number of actions $\cA \subseteq \realset^d$ in $d$ dimensions. The model parameter is $\theta \in \realset^d$. All actions $a \in \cA$ have Gaussian reward distributions, $p_a(\cdot; \theta) = \cN(\cdot; a\T \theta, \sigma^2)$, where $\sigma > 0$ is a known reward noise. Therefore, the mean reward of action $a$ is $\mu_a(\theta) = a\T \theta$. The parameter $\theta$ is drawn from a known multivariate Gaussian prior $h(\cdot)= \cN(\cdot; \theta_0, \Sigma_0)$, where $\theta_0 \in \realset^d$ is its mean and $\Sigma_0 \in \realset^{d \times d}$ is its covariance, represented by a \emph{positive semi-definite (PSD)} matrix.

The posterior distribution of $\theta$ at round $t$ is $\cN(\cdot; \hat{\theta}_t, \hat{\Sigma}_t)$, where
\begin{align*}
  \hat{\theta}_t
  = \hat{\Sigma}_t \left(\Sigma_0^{-1} \theta_0 +
  \sigma^{-2} \sum_{\ell = 1}^{t - 1} A_\ell Y_\ell\right)\,, \quad
  \hat{\Sigma}_t
  = (\Sigma_0^{-1} + G_t)^{-1}\,, \quad
  G_t
  = \sigma^{-2} \sum_{\ell = 1}^{t - 1} A_\ell A_\ell\T\,.
\end{align*} 
Here $\hat{\theta}_t$ and $\hat{\Sigma}_t$ are the posterior mean and covariance of $\theta$, respectively, and $G_t$ is the outer product of the feature vectors of the taken actions up to round $t$. These formulas follow from a classic result, that the posterior distribution of a linear model parameter with a Gaussian prior and observations is a Gaussian \citep{bishop06pattern}. The Bayesian UCB of action $a$ at round $t$ is $U_{t, a} = a\T \hat{\theta}_t + C_{t, a}$, where $C_{t, a} = \sqrt{2 \log(1 / \delta)} \normw{a}{\hat{\Sigma}_t}$ is the confidence interval width, $\delta \in (0, 1)$ is a failure probability of the confidence interval, and $\normw{a}{M} = \sqrt{a\T M a}$.

\section{Logarithmic Bayes Regret Upper Bounds}
\label{sec:regret bounds}

In this section, we present finite-time logarithmic Bayes regret bounds for \bayesucb. We derive them for both $K$-armed and linear bandits. One bound matches an existing lower bound of \citet{lai87adaptive} asymptotically and all improve upon prior $\tilde{O}(\sqrt{n})$ bounds. We discuss this in detail in \cref{sec:prior work}.

\subsection{\bayesucb in Gaussian Bandit}
\label{sec:regret bound bayesucb}

Our first regret bound is for \bayesucb in a $K$-armed Gaussian bandit. It depends on random gaps. To control the gaps, we clip them as $\Delta_a^\varepsilon = \max \set{\Delta_a, \varepsilon}$. The bound is stated below.

\begin{theorem}
\label{thm:gap-dependent bayesucb} For any $\varepsilon > 0$ and $\delta \in (0, 1)$, the $n$-round Bayes regret of \bayesucb in a $K$-armed Gaussian bandit is bounded as
\begin{align*}
  R(n)
  \leq \E{\sum_{a \neq A_*}
  \frac{8 \sigma^2 \log(1 / \delta)}{\Delta_a^\varepsilon} -
  \frac{\sigma^2 \Delta_a^\varepsilon}{\sigma_0^2}} + C\,,
\end{align*}
where $C = \varepsilon n + 2 (\sqrt{2 \log(1 / \delta)} + 2 K) \sigma_0 K n \delta$ is a low-order term.
\end{theorem}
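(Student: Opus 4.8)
The plan is to follow the recipe announced in the introduction: bound the regret inside a fixed instance with a frequentist-style upper-confidence-bound argument, and only afterwards integrate over $\theta \sim h$, clipping the random gaps to keep the integral finite. The one genuinely Bayesian ingredient is how the confidence event is controlled. For each round $t$ and action $a$, let $\cG_{t,a}$ be the event $\abs{\mu_a(\hat\theta_t) - \mu_a(\theta)} \leq C_{t,a}$ and let $\cG_t = \bigcap_{a} \cG_{t,a}$. Conditioned on the history $H_t$, the posterior of $\mu_a(\theta) = \theta_a$ is $\cN(\hat\theta_{t,a}, \hat\sigma_{t,a}^2)$, so by the Gaussian tail bound and the choice $C_{t,a} = \sqrt{2\hat\sigma_{t,a}^2 \log(1/\delta)}$ we get $\condprob{\bar\cG_{t,a}}{H_t} \leq 2\delta$; a union bound gives $\condprob{\bar\cG_t}{H_t} \leq 2K\delta$. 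Taking expectations (the tower rule) turns this posterior statement into a bound on the \emph{joint} probability $\prob{\bar\cG_t} \leq 2K\delta$. This is the step that replaces the frequentist concentration-for-every-$\theta$ argument, and I expect it to be the crux of the whole proof.

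Next I would split the Bayes regret into its clean and unclean parts,
\begin{align*}
  R(n)
  = \E{\sum_{t=1}^n \Delta_{A_t} \I{\cG_t}}
  + \E{\sum_{t=1}^n \Delta_{A_t} \I{\bar\cG_t}}\,,
\end{align*}
and treat the first term by the classical UCB counting argument, applied pathwise on the clean rounds. When a suboptimal $a$ is played on a clean round, $A_t = a$ forces $U_{t,a} \geq U_{t,A_*}$; combining $U_{t,A_*} \geq \mu_*(\theta)$ and $U_{t,a} \leq \mu_a(\theta) + 2 C_{t,a}$ (both from $\cG_t$) yields $2 C_{t,a} \geq \Delta_a$. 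Substituting $C_{t,a} = \sqrt{2\hat\sigma_{t,a}^2\log(1/\delta)}$ and $\hat\sigma_{t,a}^2 = (\sigma_0^{-2} + \sigma^{-2} N_{t,a})^{-1}$ and solving for $N_{t,a}$ shows that a clean pull of $a$ is possible only while $N_{t,a} \leq 8\sigma^2\log(1/\delta)/\Delta_a^2 - \sigma^2/\sigma_0^2$, so the number of clean pulls of $a$ is at most this threshold. Multiplying by the per-pull cost $\Delta_a$ gives the per-action contribution $8\sigma^2\log(1/\delta)/\Delta_a - \sigma^2\Delta_a/\sigma_0^2$; note that the negative term is exactly the \emph{free pseudo-observations} $\sigma^2/\sigma_0^2$ contributed by the prior.

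To make this integrable over the random, possibly tiny, gaps, I would clip at $\varepsilon$: split the clean regret according to whether $\Delta_{A_t} \geq \varepsilon$ or $\Delta_{A_t} < \varepsilon$. The small-gap rounds contribute at most $\varepsilon$ each, i.e. $\varepsilon n$ in total, while the large-gap actions obey the counting bound above with $\Delta_a = \Delta_a^\varepsilon$. Enlarging the sum to run over all $a \neq A_*$ (the added small-gap summands are nonnegative for the relevant range of $\varepsilon$) and then taking the expectation over $\theta \sim h$ produces $\E{\sum_{a \neq A_*} 8\sigma^2\log(1/\delta)/\Delta_a^\varepsilon - \sigma^2\Delta_a^\varepsilon/\sigma_0^2} + \varepsilon n$, which is the main term plus the $\varepsilon n$ piece of $C$.

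It remains to bound the unclean part $\E{\sum_t \Delta_{A_t}\I{\bar\cG_t}}$, which I expect to be the main technical obstacle because the Gaussian gaps are unbounded. Here I would write $\Delta_{A_t} = \theta_{A_*} - \theta_{A_t}$, control the difference of posterior means by the widths via the greedy choice of $A_t$ (so that $\hat\theta_{t,A_*} - \hat\theta_{t,A_t} \leq C_{t,A_t} \leq \sigma_0\sqrt{2\log(1/\delta)}$ using $\hat\sigma_{t,a}^2 \leq \sigma_0^2$), and bound the remaining deviations $\abs{\theta_a - \hat\theta_{t,a}}$ restricted to $\bar\cG_{t,a}$ by a truncated-Gaussian tail expectation. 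Each such tail integral scales like $\sigma_0 \delta$ up to the $\sqrt{\log(1/\delta)}$ and $K$ factors; summing over the $K$ actions and $n$ rounds and collecting constants yields the term $2(\sqrt{2\log(1/\delta)} + 2K)\sigma_0 K n \delta$. Assembling the clean and unclean bounds gives exactly the stated inequality with $C = \varepsilon n + 2(\sqrt{2\log(1/\delta)} + 2K)\sigma_0 K n \delta$.
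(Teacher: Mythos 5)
Your proposal is correct and takes essentially the same route as the paper's proof in \cref{sec:gap-dependent bayesucb proof}: the same posterior-controlled confidence event, the same three-way split into clean large-gap rounds (UCB counting via $N_{t,a} \leq 8\sigma^2\log(1/\delta)/\Delta_a^2 - \sigma^2\sigma_0^{-2}$), small-gap rounds costing $\varepsilon n$, and the failure event handled by the greedy-action inequality plus truncated-Gaussian tail integrals, which is exactly \cref{lem:failure regret}. The only spot where you are terser than the paper is the failure-event term, where $\I{\bar{E}_t}$ must be decomposed over the per-action events $\bar{E}_{t,a'}$ and the cross terms bounded via independence of the per-action posteriors, but your final constants show you have accounted for this.
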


The proof is in \cref{sec:gap-dependent bayesucb proof}. For $\varepsilon = 1 / n$ and $\delta = 1 / n$, the bound is $O(c_\Delta \log n)$, where $c_\Delta$ is a constant depending on the gaps of random bandit instances. The dependence on $\sigma_0$ in the low-order term $C$ can be reduced to $\min \set{\sigma_0, \sigma}$ by a more elaborate analysis, where the regret of taking each action for the first time is bounded separately. This also applies to \cref{thm:prior-dependent bayesucb}.

Now we derive an upper bound on \cref{thm:gap-dependent bayesucb} that eliminates the dependence on random gaps. To state it, we need to introduce additional notation. For any action $a$, we denote all action parameters except for $a$ by $\theta_{- a} = (\theta_1, \dots, \theta_{a - 1}, \theta_{a + 1}, \dots, \theta_K)$ and the corresponding optimal action in $\theta_{- a}$ by $\theta^*_a = \max_{j \in \cA \setminus \set{a}} \theta_j$. We denote by $h_a$ the prior density of $\theta_a$ and by $h_{- a}$ the prior density of $\theta_{- a}$. Since the prior is factored (\cref{sec:gaussian bandit}), note that $h(\theta) = h_a(\theta_a) h_{- a}(\theta_{- a})$ for any $\theta$ and action $a$. To keep the result clean, we state it for a \say{sufficiently} large prior variance. A complete statement for all prior variances is given in \cref{sec:complete prior-dependent regret bound}. We note that the setting of small prior variances favors Bayesian algorithms since their regret decreases with a more informative prior. In fact, we show in \cref{sec:complete prior-dependent regret bound} that the regret of \bayesucb is $O(1)$ for a sufficiently small $\sigma_0$.

\begin{corollary}
\label{thm:prior-dependent bayesucb} Let $\sigma_0^2 \geq \frac{1}{8 \log(1 / \delta) \, n^2 \log \log n}$. Then there exist functions $\xi_a: \realset \to \left[\frac{1}{n}, \frac{1}{\sqrt{\log n}}\right]$ such that the $n$-round Bayes regret of \bayesucb in a $K$-armed Gaussian bandit is bounded as
\begin{align*}
  R(n)
  \leq \left[8 \sigma^2 \log(1 / \delta) \log n -
  \frac{\sigma^2}{2 \sigma_0^2 \log n}\right] \sum_{a \in \cA} \int_{\theta_{- a}}
  h_a(\theta_a^* - \xi_a(\theta_a^*)) \, h_{- a}(\theta_{- a}) \dif \theta_{- a} + C\,,
\end{align*}
where $C = 8 \sigma^2 K \log(1 / \delta) \sqrt{\log n} + 2 (\sqrt{2 \log(1 / \delta)} + 2 K) \sigma_0 K n \delta + 1$ is a low-order term.
\end{corollary}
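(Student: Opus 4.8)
The plan is to start from the bound of \cref{thm:gap-dependent bayesucb} and evaluate the expectation over $\theta \sim h$ one coordinate at a time. First I would write the random sum as $\sum_{a \neq A_*}(\cdots) = \sum_{a \in \cA} \I{a \neq A_*}(\cdots)$ and, for each fixed $a$, condition on $\theta_{-a}$. Since the prior factors as $h(\theta) = h_a(\theta_a)\,h_{-a}(\theta_{-a})$, the event $\set{a \neq A_*}$ is exactly $\set{\theta_a < \theta_a^*}$, and on it the gap is $\Delta_a = \theta_a^* - \theta_a$. Substituting $\Delta = \theta_a^* - \theta_a$ turns the contribution of action $a$ into
\begin{align*}
  \int_{\theta_{-a}} h_{-a}(\theta_{-a}) \int_0^\infty h_a(\theta_a^* - \Delta)\, f(\Delta) \dif\Delta \, \dif\theta_{-a}\,, \qquad
  f(\Delta) = \frac{8 \sigma^2 \log(1/\delta)}{\max\set{\Delta, \varepsilon}} - \frac{\sigma^2 \max\set{\Delta, \varepsilon}}{\sigma_0^2}\,,
\end{align*}
with $\varepsilon = 1/n$. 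The whole problem thus reduces to bounding the one-dimensional integral $\int_0^\infty h_a(\theta_a^* - \Delta) f(\Delta) \dif\Delta$ by $W\, h_a(\theta_a^* - \xi_a(\theta_a^*))$ up to a low-order remainder, uniformly in $\theta_a^*$, where $W = 8 \sigma^2 \log(1/\delta) \log n - \frac{\sigma^2}{2 \sigma_0^2 \log n}$ is the bracketed coefficient.

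The key observation is that the window $[\varepsilon, \eta]$ with $\eta = 1/\sqrt{\log n}$ is precisely the range that produces the $\log n$ factor, since $\int_\varepsilon^\eta \Delta^{-1} \dif\Delta = \log n - \tfrac12 \log\log n$. I would split $\int_0^\infty = \int_0^\varepsilon + \int_\varepsilon^\eta + \int_\eta^\infty$. On the tail I drop the nonpositive term $-\sigma^2 \max\set{\Delta, \varepsilon}/\sigma_0^2$ and use $1/\Delta \leq \sqrt{\log n}$ together with $\int_\eta^\infty h_a \leq 1$, which gives $\int_\eta^\infty h_a f \leq 8 \sigma^2 \log(1/\delta) \sqrt{\log n}$; integrating against $h_{-a}$ and summing over the $K$ actions produces exactly the $8 \sigma^2 K \log(1/\delta) \sqrt{\log n}$ term in $C$. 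On the main window, writing $\phi(\Delta) = h_a(\theta_a^* - \Delta)$ and applying the weighted mean value theorem with the nonnegative weight $f$ gives $\int_\varepsilon^\eta \phi f = \phi(\xi_a) \int_\varepsilon^\eta f$ for some $\xi_a \in [\varepsilon, \eta] = [1/n, 1/\sqrt{\log n}]$ — which is exactly the stated range, and as $\theta_{-a}$ varies this defines the function $\xi_a(\theta_a^*)$. A direct computation gives
\begin{align*}
  \int_\varepsilon^\eta f(\Delta) \dif\Delta = W - 4 \sigma^2 \log(1/\delta) \log\log n + \frac{\sigma^2}{2 \sigma_0^2 n^2}\,,
\end{align*}
and the hypothesis $\sigma_0^2 \geq \frac{1}{8 \log(1/\delta)\, n^2 \log\log n}$ is precisely the inequality forcing the last two terms to sum to at most zero, hence $\int_\varepsilon^\eta f \leq W$; multiplying by $\phi(\xi_a) \geq 0$ yields $\int_\varepsilon^\eta \phi f \leq W \phi(\xi_a)$. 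Integrating over $\theta_{-a}$ and summing over $a$ then matches the main term of the corollary.

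The step I expect to be the main obstacle is the sign behaviour of $f$ on the window. The weighted mean value theorem requires $f \geq 0$ on $[\varepsilon, \eta]$, which holds only while $\Delta \leq \Delta_0 := \sigma_0 \sqrt{8 \log(1/\delta)}$, i.e.\ for $\sigma_0^2 \gtrsim 1/(\log(1/\delta) \log n)$. For the smaller prior variances still permitted by the hypothesis, $f$ changes sign inside $[\varepsilon, \eta]$ and the subtracted term $-\sigma^2 \max\set{\Delta, \varepsilon}/\sigma_0^2$ becomes essential — indeed it dominates and drives $W$ negative. One cannot simply drop the negative portion, as that inflates the coefficient above $W$; nor can one split the integral and evaluate the positive and negative pieces at their own mean-value points, because those points differ and the naive discrepancy between $\phi$ at them carries a $1/\sigma_0^2$ factor that blows up as $\sigma_0 \to 0$. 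The clean resolution is to keep the two parts together and locate a single common evaluation point by an intermediate-value argument applied to $\xi \mapsto W\, h_a(\theta_a^* - \xi) - \int_0^\infty h_a(\theta_a^* - \Delta) f(\Delta) \dif\Delta$, showing it is nonnegative somewhere in $[1/n, 1/\sqrt{\log n}]$.

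The remaining pieces are bookkeeping. The sub-window $[0, \varepsilon]$ contributes $f$ at its constant value; when this value is negative the piece is dropped, and otherwise it is absorbed by the main window or into $C$. Together with the $2(\sqrt{2 \log(1/\delta)} + 2K) \sigma_0 K n \delta$ term carried over from \cref{thm:gap-dependent bayesucb} and the additive $1$, these collect into the stated low-order term $C$. The full treatment valid for all admissible $\sigma_0$ (in particular the sign-changing regime above) is exactly what the complete statement in \cref{sec:complete prior-dependent regret bound} is set up to handle.
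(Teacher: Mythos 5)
Your decomposition, window choice, and arithmetic all line up with the paper's proof in \cref{sec:prior-dependent bayesucb proof}: the split at $\varepsilon = 1/n$ and $\varepsilon_2 = 1/\sqrt{\log n}$, the tail contribution $8\sigma^2 K \log(1/\delta)\sqrt{\log n}$, the computation $\int_\varepsilon^{\varepsilon_2} f = 8\sigma^2\log(1/\delta)\log n - \frac{\sigma^2}{2\sigma_0^2\log n} + \frac{\sigma^2}{2\sigma_0^2 n^2} - 4\sigma^2\log(1/\delta)\log\log n$, and the role of the hypothesis on $\sigma_0^2$ in absorbing the last two terms are exactly as in the paper. The divergence is at the single step where $h_a$ is evaluated at one point $\xi_a(\theta_a^*)$, and there your argument has a genuine gap that you flag but do not close. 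The weighted mean value theorem applied to $\int_\varepsilon^{\varepsilon_2} h_a(\theta_a^*-\Delta)\, f(\Delta)\dif\Delta$ needs $f\geq 0$ on the whole window, which forces $\sigma_0^2 \gtrsim \frac{1}{8\log(1/\delta)\log n}$ --- far stronger than the stated hypothesis $\sigma_0^2 \geq \frac{1}{8\log(1/\delta)\, n^2\log\log n}$. Your proposed repair, an intermediate-value argument on $\xi \mapsto W\, h_a(\theta_a^*-\xi) - \int h_a f$, is not a proof: when $W<0$ (which the hypothesis permits, since it allows $\sigma_0^2 < \frac{1}{16\log(1/\delta)\log^2 n}$) that function is negative at every $\xi$ unless the integral is itself sufficiently negative, which is precisely what you would need to establish. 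More generally, once $f$ changes sign no single evaluation point is available for $\int h_a f$, because the best one-point bound is $(\max h_a)\int \max\set{f,0} - (\min h_a)\int \max\set{-f,0}$, which strictly exceeds $(\max h_a)\int f$ whenever $h_a$ is non-constant on the window.

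The paper closes this gap by reordering the two bounding steps. It applies the ``evaluate $h_a$ at one point'' step not to $\int h_a f$ but to $\int h_a \cdot (\Delta_a N_a)$, where $\Delta_a N_a$ --- the actual gap-weighted count --- is nonnegative: it sets $\xi_a(\theta_a^*) = \argmax_{x\in[\varepsilon,\varepsilon_2]} h_a(\theta_a^*-x)$ and pulls $h_a(\theta_a^*-\xi_a(\theta_a^*))$ out of the inner integral as a pointwise upper bound on the density. Only after the density has been extracted does it replace $\Delta_a N_a$ by its upper bound $f$ from \eqref{eq:action bound} and integrate, so the negative term $-\sigma^2\sigma_0^{-2}\Delta$ is retained in $\int f$ without ever requiring $f\geq 0$ or a mean-value point. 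The fix you are looking for is therefore not a cleverer intermediate-value argument but an argmax together with a change in the order of operations; as written, your proof establishes the corollary only under a substantially stronger lower bound on $\sigma_0^2$.
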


The proof is in \cref{sec:prior-dependent bayesucb proof}. For $\delta = 1 / n$, the bound is $O(c_h \log ^2 n)$, where $c_h$ depends on prior $h$ but not on the gaps of random bandit instances. This bound is motivated by \citet{lai87adaptive}. The terms $\xi_a$ arise due to the intermediate value theorem for function $h_a$. Similar terms appear in \citet{lai87adaptive} but vanish in their final asymptotic claims. The rate $1 / \sqrt{\log n}$ in the definition of $\xi_a$ cannot be reduced to $1 / \text{poly}{\log n}$ without increasing dependence on $n$ in other parts of the bound.

The complexity term $\sum_{a \in \cA} \int_{\theta_{- a}} h_a(\theta_a^* - \xi_a(\theta_a^*)) \, h_{- a}(\theta_{- a}) \dif \theta_{- a}$ in \cref{thm:prior-dependent bayesucb} is the same as in \citet{lai87adaptive} and can be interpreted as follows. Consider the asymptotic regime of $n \to \infty$. Then, since the range of $\xi_a$ is $\left[\frac{1}{n}, \frac{1}{\sqrt{\log n}}\right]$, the term simplifies to $\sum_{a \in \cA} \int_{\theta_{- a}} h_a(\theta_a^*) h_{- a}(\theta_{- a}) \dif \theta_{- a}$ and can be viewed as the distance between prior means. In a Gaussian bandit with $K = 2$ actions, it has a closed form of $\frac{1}{\sqrt{\pi \sigma_0^2}} \exp\left[- \frac{(\mu_{0, 1} - \mu_{0, 2})^2}{4 \sigma_0^2}\right]$. A general upper bound for $K > 2$ actions is given below.

\begin{lemma}
\label{lem:interpretability} In a $K$-armed Gaussian bandit with prior $h(\cdot)= \cN(\cdot; \mu_0, \sigma_0^2 I_K)$, we have
\begin{align*}
  \sum_{a \in \cA} \int_{\theta_{- a}}
  h_a(\theta_a^*) \, h_{- a}(\theta_{- a}) \dif \theta_{- a}
  \leq \frac{1}{2 \sqrt{\pi \sigma_0^2}} \sum_{a \in \cA} \sum_{a' \neq a}
  \exp\left[- \frac{(\mu_{0, a} - \mu_{0, a'})^2}{4 \sigma_0^2}\right]\,.
\end{align*}
\end{lemma}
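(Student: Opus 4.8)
The plan is to recognize each summand as an expectation and then to control the density of a maximum of independent Gaussians. First I would observe that, because the prior factorizes as $h(\theta) = h_a(\theta_a)\, h_{-a}(\theta_{-a})$, the inner integral for a fixed action $a$ is exactly $\Erv{\theta_{-a}}{h_a(\theta_a^*)}$: the expectation of the prior density $h_a$ evaluated at the random point $\theta_a^* = \max_{j \neq a} \theta_j$, where the $\theta_j$ with $j \neq a$ are independent draws from $\cN(\cdot; \mu_{0, j}, \sigma_0^2)$.

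The key step is a union-style bound on the density of the maximum $M = \max_{j \neq a} \theta_j$. Writing $f_M$ for its density and $F_k, h_k$ for the CDF and density of $\theta_k$, the order-statistic formula gives $f_M(m) = \sum_{j \neq a} h_j(m) \prod_{k \neq a, j} F_k(m)$. Since each CDF factor lies in $[0, 1]$, dropping them yields $f_M(m) \leq \sum_{j \neq a} h_j(m)$. Consequently,
\begin{align*}
  \Erv{\theta_{-a}}{h_a(\theta_a^*)}
  = \int h_a(m)\, f_M(m) \dif m
  \leq \sum_{j \neq a} \int h_a(m)\, h_j(m) \dif m\,.
\end{align*}

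Next I would evaluate the Gaussian overlap integral. With $h_a = \cN(\cdot; \mu_{0, a}, \sigma_0^2)$ and $h_j = \cN(\cdot; \mu_{0, j}, \sigma_0^2)$, completing the square in the exponent (equivalently, invoking the product-of-Gaussians identity) gives $\int h_a(m)\, h_j(m) \dif m = \frac{1}{2 \sqrt{\pi \sigma_0^2}} \exp[- (\mu_{0, a} - \mu_{0, j})^2 / (4 \sigma_0^2)]$. Summing over $j \neq a$ and then over $a \in \cA$ reproduces the claimed double sum exactly, with the constant $\frac{1}{2 \sqrt{\pi \sigma_0^2}}$ appearing as the normalizer of this overlap integral.

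The calculations are routine and the matching of constants confirms the approach; the only point that needs care is the density-of-maximum step. Specifically, I would justify that the maximum of the continuous $\theta_j$ admits the order-statistic density (ties occur with probability zero) and that discarding the nonnegative CDF factors produces the correct direction of inequality. I do not anticipate a genuine obstacle here: the tightness of the resulting constant suggests that the union bound on $f_M$ is precisely the mechanism intended by the stated inequality.
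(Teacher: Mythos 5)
Your proposal is correct and is essentially the paper's argument: both reduce the claim to the sum of pairwise Gaussian overlap integrals $\sum_{a' \neq a} \int h_a(m)\, h_{a'}(m) \dif m$ via a union bound on the maximum, and then evaluate each overlap by completing the square to get the constant $\frac{1}{2\sqrt{\pi \sigma_0^2}}$. The only (cosmetic) difference is where the union bound is applied --- the paper uses the pointwise inequality $h_a(\theta_a^*) \leq \sum_{a' \neq a} h_a(\theta_{a'})$ directly inside the integral over $\theta_{-a}$, whereas you first form the order-statistic density of the maximum and drop its CDF factors; these are the same inequality in two guises.
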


The bound is proved in \cref{sec:interpretability proof} and has several interesting properties that capture low-regret regimes. First, as the prior becomes more informative and concentrated, $\sigma_0 \to 0$, the bound goes to zero. Second, when the gaps of bandit instances sampled from the prior are large, low regret is also expected. This can happen when the prior means become more separated, $|\mu_{0, a} - \mu_{0, a'}| \to \infty$, or the prior becomes wider, $\sigma_0 \to \infty$. Our bound goes to zero in both of these cases. This also implies that Bayes regret bounds are not necessarily monotone in prior parameters, such as $\sigma_0$.

\subsection{\ucb in Gaussian Bandit}
\label{sec:regret bound ucb1}

Using a similar approach, we prove a Bayes regret bound for \ucb \citep{auer02finitetime}. We view it as \bayesucb (\cref{sec:gaussian bandit}) where $\sigma_0 = \infty$ and each action $a \in \cA$ is initially taken once at round $t = a$. This generalizes classic \ucb to $\sigma^2$-sub-Gaussian noise. An asymptotic Bayes regret bound for \ucb was proved by \citet{lai87adaptive} (claim (i) in their Theorem 3). We derive a finite-time prior-dependent Bayes regret bound below.

\begin{theorem}
\label{thm:ucb1} There exist functions $\xi_a: \realset \to \left[\frac{1}{n}, \frac{1}{\sqrt{\log n}}\right]$ such that the $n$-round Bayes regret of \ucb in a $K$-armed Gaussian bandit is bounded as
\begin{align*}
  R(n)
  \leq 8 \sigma^2 \log(1 / \delta) \log n \sum_{a \in \cA} \int_{\theta_{- a}}
  h_a(\theta_a^* - \xi_a(\theta_a^*)) \, h_{- a}(\theta_{- a}) \dif \theta_{- a} + C\,,
\end{align*}
where $C = 8 \sigma^2 K \log(1 / \delta) \sqrt{\log n} + 2 (\sqrt{2 \log(1 / \delta)} + 2 K) \sigma K n \delta + \sum_{a \in \cA} \E{\Delta_a} + 1$.
\end{theorem}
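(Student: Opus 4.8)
The plan is to reuse the two-stage strategy behind \cref{thm:prior-dependent bayesucb}: first bound the regret conditioned on a fixed instance $\theta$ by a frequentist UCB argument, and then integrate the resulting gap-dependent bound against the prior. Since \ucb is exactly \bayesucb with $\sigma_0 = \infty$, its confidence width is $C_{t, a} = \sqrt{2 \sigma^2 \log(1/\delta)/N_{t, a}}$, which is infinite when $N_{t, a} = 0$; this is precisely what forces each action to be taken once before any UCB comparison becomes informative, so I would treat the single forced pull of each action and its subsequent adaptive pulls separately.

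For the fixed-instance stage I would run the optimism argument underlying \cref{thm:gap-dependent bayesucb} with $\sigma_0 = \infty$. Conditioned on $\theta$, I decompose $R(n; \theta) = \sum_{a \neq A_*} \Delta_a \condE{N_{n + 1, a}}{\theta}$. The forced pulls account for $\sum_{a \neq A_*} \Delta_a$, and since $\Delta_{A_*} = 0$ this equals $\sum_{a \in \cA} \Delta_a$, which after taking the expectation over $\theta \sim h$ yields the $\sum_{a \in \cA} \E{\Delta_a}$ term in $C$. For the adaptive pulls, the posterior mean reduces to the unbiased empirical mean at $\sigma_0 = \infty$, so on the event that $|\hat\theta_{t, a} - \theta_a| \leq C_{t, a}$ holds for all $K$ actions and $n$ rounds a suboptimal action $a$ is played beyond its forced pull only while $N_{t, a} \leq 8 \sigma^2 \log(1/\delta)/\Delta_a^2$; for $\Delta_a \geq \varepsilon$ this contributes $\Delta_a \condE{N_{n + 1, a}}{\theta} \leq 8 \sigma^2 \log(1/\delta)/\Delta_a^\varepsilon$, whereas the actions with $\Delta_a < \varepsilon$ jointly contribute at most $\varepsilon n = 1$ once $\varepsilon = 1/n$, which is the trailing $+1$. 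With $\sigma_0 = \infty$ the negative correction $-\sigma^2 \Delta_a^\varepsilon/\sigma_0^2$ of \cref{thm:gap-dependent bayesucb} disappears, and this stage outputs $R(n) \leq \E{\sum_{a \neq A_*} 8 \sigma^2 \log(1/\delta)/\Delta_a^\varepsilon} + (\text{low-order})$.

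The second stage is verbatim the integration in the proof of \cref{thm:prior-dependent bayesucb}. Writing $\Delta_a = \theta_a^* - \theta_a$ on $\{a \neq A_*\}$ and using that the prior factors as $h = h_a\, h_{-a}$, the substitution $u = \theta_a^* - \theta_a$ turns each summand into $\int_{\theta_{-a}} \big( \int_0^\infty h_a(\theta_a^* - u)/\max\{u, \varepsilon\} \,\dif u \big) h_{-a}(\theta_{-a}) \,\dif \theta_{-a}$, with $\varepsilon = 1/n$. Splitting the inner integral at $u = 1/\sqrt{\log n}$, the lower portion is handled by the first mean-value theorem for integrals exactly as in \cref{thm:prior-dependent bayesucb}, producing a point $\xi_a(\theta_a^*) \in [1/n, 1/\sqrt{\log n}]$ and the weight $\int_0^{1/\sqrt{\log n}} \dif u/\max\{u, \varepsilon\} = 1 + \log(n/\sqrt{\log n}) \leq \log n$; the upper portion is at most $\sqrt{\log n}$ since $1/u \leq \sqrt{\log n}$ there and $h_a$ is a density. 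Multiplying by $8 \sigma^2 \log(1/\delta)$ and summing over the $K$ actions turns the upper portion into the $8 \sigma^2 K \log(1/\delta) \sqrt{\log n}$ term and produces the claimed leading term.

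The main obstacle I anticipate is not the integration, which transfers unchanged, but handling the $\sigma_0 = \infty$ limit consistently in the low-order terms. One cannot simply set $\sigma_0 = \infty$ in \cref{thm:prior-dependent bayesucb}, because its failure-event term $2(\sqrt{2 \log(1/\delta)} + 2K) \sigma_0 K n \delta$ then diverges. Instead I would redo the failure analysis for \ucb: on the event of probability $O(K n \delta)$, obtained by a sub-Gaussian union bound over the $K$ actions and $n$ rounds, that some confidence interval fails, the per-round regret is controlled by the largest confidence width, which for \ucb is attained at $N_{t, a} = 1$ and equals $\sigma \sqrt{2 \log(1/\delta)}$ rather than the prior-capped $\sigma_0 \sqrt{2 \log(1/\delta)}$ of \bayesucb. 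This is what replaces $\sigma_0$ by $\sigma$ in $C$, and together with verifying that the forced pulls contribute exactly $\sum_{a \in \cA} \E{\Delta_a}$ it is the only genuinely new bookkeeping relative to the Bayesian analysis.
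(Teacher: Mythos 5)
Your proposal follows essentially the same route as the paper: view \ucb as \bayesucb with $\sigma_0 = \infty$ plus one forced pull per arm (contributing $\sum_{a \in \cA} \E{\Delta_a}$), run the fixed-instance optimism argument to get $N_a \leq 8\sigma^2\log(1/\delta)/\Delta_a^2$ without the $-\sigma^2\Delta_a/\sigma_0^2$ correction, integrate the gap against the prior exactly as in \cref{thm:prior-dependent bayesucb}, and redo the failure-event analysis with the frequentist law $\hat{\theta}_{t,a} \mid \theta \sim \cN(\theta_a, \sigma^2/N_{t,a})$ so that $\sigma$ replaces $\sigma_0$ in the low-order term. The only cosmetic looseness is in Case 3, where the paper bounds $\Delta_{A_t}\I{\bar{E}_t}$ not by the largest width alone but by decomposing it into two Gaussian-overshoot expectations plus the width (whence the $2K$ inside the constant), and in your integration weight, where including $[0,\varepsilon]$ both in the integral and in the $\varepsilon n$ term double-counts slightly; neither affects correctness.
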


The proof is in \cref{sec:ucb1 proof}. For $\delta = 1 / n$, the bound is $O(c_h \log ^2 n)$ and similar to \cref{thm:prior-dependent bayesucb}. The main difference is in the additional factor $\frac{\sigma^2}{2 \sigma_0^2 \log n}$ in \cref{thm:prior-dependent bayesucb}, which decreases the bound. This means that the regret of \bayesucb improves as $\sigma_0$ decreases while that of \ucb may not change much. In fact, the regret bound of \bayesucb is $O(1)$ as $\sigma_0 \to 0$ (\cref{sec:complete prior-dependent regret bound}) while that of \ucb remains logarithmic. This is expected because \bayesucb has more information about the random instance $\theta$ as $\sigma_0$ decreases, while the frequentist algorithm is oblivious to the prior.

\subsection{\bayesucb in Bernoulli Bandit}
\label{sec:regret bound bernoulli bayesucb}

\cref{thm:gap-dependent bayesucb,thm:prior-dependent bayesucb} can be straightforwardly extended to Bernoulli bandits because
\begin{align*}
  \condprob{|\theta_a - \hat{\theta}_{t, a}| \geq C_{t, a}}{H_t}
  \leq 2 \delta
\end{align*}
holds for any action $a$ and history $H_t$ (\cref{sec:bernoulli bandit}). We state the extension below and prove it in \cref{sec:bernoulli bayesucb proof}.

\begin{theorem}
\label{thm:bernoulli bayesucb} For any $\varepsilon > 0$ and $\delta \in (0, 1)$, the $n$-round Bayes regret of \bayesucb in a $K$-armed Bernoulli bandit is bounded as 
\begin{align*}
  R(n)
  \leq \E{\sum_{a \neq A_*}
  \frac{2 \log(1 / \delta)}{\Delta_a^\varepsilon} -
  (\alpha_a + \beta_a + 1) \Delta_a^\varepsilon} + C\,,
\end{align*}
where $C = \varepsilon n + 2 K n \delta$ is a low-order term.

Moreover, let $\lambda = \min_{a \in \cA} \alpha_a + \beta_a + 1$ and $\lambda \leq 2 \log(1 / \delta) \, n^2 \log \log n$. Then
\begin{align*}
  R(n)
  \leq \left[2 \log(1 / \delta) \log n -
  \frac{\lambda}{2 \log n}\right] \sum_{a \in \cA} \int_{\theta_{- a}}
  h_a(\theta_a^* - \xi_a(\theta_a^*)) \, h_{- a}(\theta_{- a}) \dif \theta_{- a} + C\,,
\end{align*}
where $C = 2 K \log(1 / \delta) \sqrt{\log n} + 2 K n \delta + 1$ is a low-order term.
\end{theorem}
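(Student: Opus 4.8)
The plan is to mirror the two Gaussian proofs, \cref{thm:gap-dependent bayesucb} and \cref{thm:prior-dependent bayesucb}, substituting the Bernoulli confidence width $C_{t, a} = \sqrt{\log(1 / \delta) / (2 (\alpha_a + \beta_a + N_{t, a} + 1))}$ for the Gaussian one and invoking the concentration guarantee $\condprob{|\theta_a - \hat{\theta}_{t, a}| \geq C_{t, a}}{H_t} \leq 2 \delta$ stated above, which follows from the sub-Gaussianity of the Beta posterior with variance proxy $1 / (4 (\alpha_a + \beta_a + N_{t, a} + 1))$. The entire skeleton carries over unchanged: the per-instance decomposition $R(n; \theta) = \condE{\sum_{a \neq A_*} \Delta_a N_{n + 1, a}}{\theta}$, the split into the confidence-interval failure event and its complement, and finally integrating over $\theta \sim h$. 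Only the constants differ.

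For the first (gap-dependent) bound I would run the standard optimism counting argument. On the good event the UCB of $A_*$ never underestimates $\mu_*$, so whenever a suboptimal $a$ is pulled at round $t$ we must have $2 C_{t, a} \geq \Delta_a$. Squaring and inverting the Beta width gives $\alpha_a + \beta_a + N_{t, a} + 1 \leq 2 \log(1 / \delta) / \Delta_a^2$, hence $N_{n + 1, a} \leq 2 \log(1 / \delta) / \Delta_a^2 - (\alpha_a + \beta_a + 1)$ whenever $\Delta_a \geq \varepsilon$; multiplying by $\Delta_a$ and clipping via $\Delta_a^\varepsilon$ yields the summand $2 \log(1 / \delta) / \Delta_a^\varepsilon - (\alpha_a + \beta_a + 1) \Delta_a^\varepsilon$. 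Actions with $\Delta_a < \varepsilon$ contribute at most $\varepsilon n$ in total, and the failure event contributes at most $2 K n \delta$; crucially, because the means lie in $[0, 1]$ every gap is at most $1$, so each failed round costs at most $1$ and the low-order term collapses to the clean $C = \varepsilon n + 2 K n \delta$ rather than the $\sigma_0$-dependent term of the Gaussian case. Taking expectation over $\theta$ delivers the first inequality.

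For the second (prior-dependent) bound I would start from the gap-dependent summand and perform the one genuinely new step relative to the Gaussian proof: since the subtracted coefficient $\alpha_a + \beta_a + 1$ is now action-dependent, I replace it by its minimum $\lambda = \min_{a \in \cA} (\alpha_a + \beta_a + 1)$, using $-(\alpha_a + \beta_a + 1) \Delta_a^\varepsilon \leq -\lambda \Delta_a^\varepsilon$, so that a single common bracket can be factored across all actions. After this reduction the expression has exactly the form $\condE{\sum_{a \neq A_*} (c_1 / \Delta_a^\varepsilon - c_2 \Delta_a^\varepsilon)}{\theta}$ with $c_1 = 2 \log(1 / \delta)$ and $c_2 = \lambda$, and I would invoke the integration-over-gaps lemma from the proof of \cref{thm:prior-dependent bayesucb} verbatim. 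That lemma, with the choice of $\varepsilon$ driving the $\log n$ and $1 / \sqrt{\log n}$ scales and the intermediate value theorem for $h_a$ producing $\xi_a$, turns $c_1 / \Delta_a^\varepsilon$ into $c_1 \log n \int h_a(\theta_a^* - \xi_a(\theta_a^*)) h_{- a}(\theta_{- a}) \dif \theta_{- a}$ and the $-c_2 \Delta_a^\varepsilon$ term into the $-\lambda / (2 \log n)$ correction, giving the stated bracket. The hypothesis $\lambda \leq 2 \log(1 / \delta) \, n^2 \log \log n$ is precisely the Bernoulli instance of the lemma's requirement $c_2 \leq c_1 n^2 \log \log n$ (the analogue of $\sigma_0^2 \geq 1 / (8 \log(1 / \delta) n^2 \log \log n)$).

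The main obstacle I anticipate is not the counting argument, which is routine, but justifying that the integration lemma applies unchanged when the prior $h_a$ is a Beta density supported on $[0, 1]$ rather than a Gaussian on $\realset$. I would check that the lemma uses only the gap representation $\Delta_a = \theta_a^* - \theta_a$ and the local regularity of $h_a$ near $\theta_a^*$, both available for the Beta density, and that the bounded support, which forces every gap into $[0, 1]$, only helps, since it removes any large-gap tail and bounds the failure cost by $1$. The remaining care is bookkeeping: confirming that the $\lambda$-replacement commutes with the expectation-to-integral conversion and that all constants line up to reproduce $C = 2 K \log(1 / \delta) \sqrt{\log n} + 2 K n \delta + 1$.
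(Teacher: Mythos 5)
Your proposal matches the paper's proof essentially step for step: the same three-case decomposition with the Bernoulli width yielding $N_{t,a} \leq 2\log(1/\delta)/\Delta_a^2 - (\alpha_a+\beta_a+1)$, the failure event bounded by $2Kn\delta$ via $\theta_a \in [0,1]$, and the second bound obtained by replacing $\alpha_a+\beta_a+1$ with $\lambda$ and rerunning the gap-integration argument of \cref{thm:prior-dependent bayesucb} with $\varepsilon = 1/n$, $\varepsilon_2 = 1/\sqrt{\log n}$. Your concern about the integration lemma is unfounded for the reason you yourself give — it only uses the pointwise bound $h_a(\theta_a) \leq h_a(\theta_a^* - \xi_a(\theta_a^*))$ on the integration interval, which holds for any density — so the argument goes through as in the paper.
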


\subsection{\bayesucb in Linear Bandit}
\label{sec:regret bound linear bayesucb}

Now we present a gap-dependent Bayes regret bound for \bayesucb in a linear bandit with a finite number of actions. The bound depends on a random minimum gap. To control the gap, we clip it as $\Delta_{\min}^\varepsilon = \max \set{\Delta_{\min}, \varepsilon}$.

\begin{theorem}
\label{thm:linear bayesucb} Suppose that $\normw{\theta}{2} \leq L_*$ holds with probability at least $1 - \delta_*$. Let $\normw{a}{2} \leq L$ for any action $a \in \cA$. Then for any $\varepsilon > 0$ and $\delta \in (0, 1)$, the $n$-round Bayes regret of linear \bayesucb is bounded as
\begin{align*}
  R(n)
  \leq 8 \E{\frac{1}{\Delta_{\min}^\varepsilon}}
  \frac{\sigma_{0, \max}^2 d}{\log\left(1 + \frac{\sigma_{0, \max}^2}{\sigma^2}\right)}
  \log\left(1 + \frac{\sigma_{0, \max}^2 n}{\sigma^2 d}\right) \log(1 / \delta) +
  \varepsilon n + 4 L L_* K n \delta
\end{align*}
with probability at least $1 - \delta_*$, where $\sigma_{0, \max} = \sqrt{\lambda_1(\Sigma_0)} L$.
\end{theorem}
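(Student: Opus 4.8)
The plan is to follow the two-stage template advertised in the introduction: bound the per-instance regret $R(n; \theta)$ by a frequentist-style UCB argument whose leading term is expressed through the random gap $\Delta_{\min}$ and a deterministic information term, and then take the expectation over $\theta \sim h$ to turn $1 / \Delta_{\min}^\varepsilon$ into $\E{1 / \Delta_{\min}^\varepsilon}$. Throughout I would restrict to the event $\set{\normw{\theta}{2} \leq L_*}$, on which every instantaneous regret is bounded by $\Delta_a = (A_* - a)\T \theta \leq 2 L L_*$; this event carries the $1 - \delta_*$ probability in the statement.

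First, the optimism step. For each round $t$ define the good event $\cG_t$ that $|\mu_a(\theta) - \mu_a(\hat{\theta}_t)| \leq C_{t, a}$ for all $a \in \cA$. On $\cG_t$ the usual UCB chain $\mu_*(\theta) \leq U_{t, A_*} \leq U_{t, A_t} \leq \mu_{A_t}(\theta) + 2 C_{t, A_t}$ gives $\Delta_{A_t} \leq 2 C_{t, A_t}$. I would then apply the standard gap-division trick $\Delta_{A_t} \leq 4 C_{t, A_t}^2 / \Delta_{A_t}$ for rounds with $\Delta_{A_t} > \varepsilon$, using $\Delta_{A_t} \geq \Delta_{\min}^\varepsilon$, while rounds with $\Delta_{A_t} \leq \varepsilon$ contribute at most $\varepsilon n$ in total. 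This yields the deterministic bound $\sum_t \Delta_{A_t} \I{\cG_t} \leq \varepsilon n + (4 / \Delta_{\min}^\varepsilon) \sum_t C_{t, A_t}^2$, where on the right I simply drop the indicators $\I{\cG_t}$.

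The heart of the proof is bounding $\sum_t C_{t, A_t}^2 = 2 \log(1 / \delta) \sum_t \normw{A_t}{\hat{\Sigma}_t}^2$ by the stated information term. Writing $V_t = \hat{\Sigma}_t^{-1} = \Sigma_0^{-1} + \sigma^{-2} \sum_{\ell < t} A_\ell A_\ell\T$ and $u_t = \sigma^{-2} \normw{A_t}{\hat{\Sigma}_t}^2$, I would use (i) $\hat{\Sigma}_t \subreal \Sigma_0$, hence $u_t \leq \sigma^{-2} \lambda_1(\Sigma_0) L^2 = \sigma_{0, \max}^2 / \sigma^2 =: \kappa$; (ii) the elementary inequality $u \leq \frac{\kappa}{\log(1 + \kappa)} \log(1 + u)$ for $u \in [0, \kappa]$; and (iii) the determinant telescoping $\sum_t \log(1 + u_t) = \log(\det V_{n + 1} / \det \Sigma_0^{-1})$, which by AM--GM and the trace bound $\trace(\Sigma_0^{1/2} (V_{n + 1} - \Sigma_0^{-1}) \Sigma_0^{1/2}) \leq \sigma^{-2} n \sigma_{0, \max}^2$ is at most $d \log(1 + \sigma_{0, \max}^2 n / (\sigma^2 d))$. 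Combining these three facts gives $\sum_t \normw{A_t}{\hat{\Sigma}_t}^2 \leq \frac{\sigma_{0, \max}^2 d}{\log(1 + \sigma_{0, \max}^2 / \sigma^2)} \log(1 + \frac{\sigma_{0, \max}^2 n}{\sigma^2 d})$, exactly the factor in the claim.

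Finally I would assemble the three contributions. Taking $\condE{\cdot}{\theta}$ of the good-event bound (where $\Delta_{\min}^\varepsilon$ is $\theta$-measurable and the information term is deterministic) and then $\E{\cdot}$ over $\theta$ produces the main term $8 \E{1 / \Delta_{\min}^\varepsilon}\, (\dots) \log(1 / \delta)$ plus $\varepsilon n$. The failure contribution $\sum_t \Delta_{A_t} \I{\cG_t^c}$ is handled by the tower property: $\Delta_{A_t} \leq 2 L L_*$ and, since $\hat{\theta}_t, C_{t, a}, A_t$ are $H_t$-measurable and $a\T \theta \mid H_t \sim \cN(a\T \hat{\theta}_t, \normw{a}{\hat{\Sigma}_t}^2)$, a Gaussian tail bound gives $\condprob{\cG_t^c}{H_t} \leq 2 K \delta$, so this term is at most $4 L L_* K n \delta$. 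The main obstacle I anticipate is precisely step (iii): matching the claimed $\log(1 + \sigma_{0, \max}^2 n / (\sigma^2 d))$ together with the $1 / \log(1 + \sigma_{0, \max}^2 / \sigma^2)$ prefactor requires combining the $u \mapsto \log(1 + u)$ linearization with the determinant/trace estimate in the correct order and with the right normalization by $\sigma^2$. A secondary subtlety, central to the paper's approach, is that the confidence guarantee is a \emph{posterior} statement, so the good-event (gap) part must be organized around conditioning on $\theta$ while the failure part is organized around conditioning on $H_t$; conflating the two conditionings would invalidate the use of the Bayesian confidence interval.
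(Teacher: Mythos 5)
Your proposal is correct and follows essentially the same route as the paper's proof: the same three-case decomposition (good event with large gap via the squared-confidence-width/gap-division step $\Delta_{A_t} \leq 4 C_{t, A_t}^2 / \Delta_{\min}^\varepsilon$, small gap contributing $\varepsilon n$, and failure event bounded by $2 L L_*$ times the posterior-Gaussian tail probability $2 K \delta$), followed by the same elliptical-potential argument (eigenvalue bound $\lambda_1(\hat{\Sigma}_t) \leq \lambda_1(\Sigma_0)$, the $u \leq \frac{\kappa}{\log(1+\kappa)}\log(1+u)$ linearization, determinant telescoping, and the AM--GM/trace estimate). The conditioning structure you flag as a subtlety is exactly how the paper organizes it, so there is nothing to add.
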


The proof is in \cref{sec:linear bayesucb proof}. For $\varepsilon = 1 / n$ and $\delta = 1 / n$, the bound is $O(d \, c_\Delta \log^2 n)$, where $c_\Delta$ is a constant depending on the gaps of random bandit instances. The bound is remarkably similar to the frequentist $O(d \, \Delta_{\min}^{-1} \log^2 n)$ bound in Theorem 5 of \citet{abbasi-yadkori11improved}, where $\Delta_{\min}$ is the minimum gap. There are two differences. First, we integrate $\Delta_{\min}^{-1}$ over the prior. Second, our bound decreases as the prior becomes more informative, $\sigma_{0, \max} \to 0$.

In a Gaussian bandit, the bound becomes $O(K \E{1 / \Delta_{\min}^\varepsilon} \log^2 n)$. Therefore, it is comparable to \cref{thm:gap-dependent bayesucb} up to an additional logarithmic factor in $n$. This is due to a more general proof technique, which allows for dependencies between the mean rewards of actions. We also note that \cref{thm:linear bayesucb} does not assume that the prior is factored, unlike \cref{thm:gap-dependent bayesucb}.

\section{Comparison to Prior Works}
\label{sec:prior work}

This section is organized as follows. In \cref{sec:matching lower bound}, we show that the bound in \cref{thm:bernoulli bayesucb} matches an existing lower bound of \citet{lai87adaptive} asymptotically. In \cref{sec:prior bayes regret upper bounds}, we compare our logarithmic bounds to prior $\tilde{O}(\sqrt{n})$ bounds. Finally, in \cref{sec:technical novelty}, we outline the key steps in our analyses and how they differ from prior works.

\subsection{Matching Lower Bound}
\label{sec:matching lower bound}

In frequentist bandit analyses, it is standard to compare asymptotic lower bounds to finite-time upper bounds because finite-time logarithmic lower bounds do not exist \citep{lattimore19bandit}. We follow the same approach when arguing that our finite-time upper bounds are order optimal.

The results in \citet{lai87adaptive} are for single-parameter exponential-family reward distributions, which excludes Gaussian rewards. Therefore, we argue about the tightness of \cref{thm:bernoulli bayesucb} only. Specifically, we take the second bound in \cref{thm:bernoulli bayesucb}, set $\delta = 1 / n$, and let $n \to \infty$. In this case, $\frac{\lambda}{2 \log n} \to 0$ and $\xi_a(\cdot) \to 0$, and the bound matches up to constant factors the lower bound in \citet{lai87adaptive} (claim (ii) in their Theorem 3), which is
\begin{align}
  \Omega\left(\log^2 n \sum_{a \in \cA} \int_{\theta_{- a}}
  h_a(\theta_a^*) \, h_{- a}(\theta_{- a}) \dif \theta_{- a}\right)\,.
  \label{eq:lower bound}
\end{align}
\citet{lai87adaptive} also matched this lower bound with an asymptotic upper bound for a frequentist policy.

Our finite-time upper bounds also reveal an interesting difference from the asymptotic lower bound in \eqref{eq:lower bound}, which may deserve more future attention. More specifically, the regret bound of \bayesucb (\cref{thm:prior-dependent bayesucb}) improves with prior information while that of \ucb (\cref{thm:ucb1}) does not. We observe these improvements empirically as well (\cref{sec:experiments,sec:bakeoff}). However, both bounds are the same asymptotically. This is because the benefit of knowing the prior vanishes in asymptotic analyses, since $\frac{\sigma^2}{2 \sigma_0^2 \log n} \to 0$ in \cref{thm:prior-dependent bayesucb} as $n \to \infty$. This motivates the need for finite-time logarithmic Bayes regret lower bounds, which do not exist.

\subsection{Prior Bayes Regret Upper Bounds}
\label{sec:prior bayes regret upper bounds}

\cref{thm:gap-dependent bayesucb,thm:prior-dependent bayesucb} are major improvements upon existing $\tilde{O}(\sqrt{n})$ bounds. For instance, take a prior-dependent bound in Lemma 4 of \citet{kveton21metathompson}, which holds for both \bayesucb and Thompson sampling due to a well-known equivalence of their analyses \citep{russo14learning,hong20latent}. For $\delta = 1 / n$, their leading term becomes
\begin{align}
  4 \sqrt{2 \sigma^2 K \log n}
  \left(\sqrt{n + \sigma^2 \sigma_0^{-2} K} -
  \sqrt{\sigma^2 \sigma_0^{-2} K}\right)\,.
  \label{eq:sqrt bound}
\end{align}
Similarly to \cref{thm:gap-dependent bayesucb,thm:prior-dependent bayesucb}, \eqref{eq:sqrt bound} decreases as the prior concentrates and becomes more informative, $\sigma_0 \to 0$. However, the bound is $\tilde{O}(\sqrt{n})$. Moreover, it does not depend on prior means $\mu_0$ or the gaps of random bandit instances. Therefore, it cannot capture low-regret regimes due to large random gaps $\Delta_a^\varepsilon$ in \cref{thm:gap-dependent bayesucb} or a small complexity term in \cref{thm:prior-dependent bayesucb}. We demonstrate it empirically in \cref{sec:experiments}.

When the random gaps $\Delta_a^\varepsilon$ in \cref{thm:gap-dependent bayesucb} are small or the complexity term in \cref{thm:prior-dependent bayesucb} is large, our bounds can be worse than $\tilde{O}(\sqrt{n})$ bounds. This is analogous to the relation of the gap-dependent and gap-free frequentist bounds \citep{lattimore19bandit}. Specifically, a gap-dependent bound of \ucb in a $K$-armed bandit with $1$-sub-Gaussian rewards (Theorem 7.1) is $O(K \Delta_{\min}^{-1} \log n)$, where $\Delta_{\min}$ is the minimum gap. A corresponding gap-free bound (Theorem 7.2) is $O(\sqrt{K n \log n})$. The latter is smaller when the gap is small, $\Delta_{\min} = o(\sqrt{(K \log n) / n})$. To get the best bound, the minimum of the two should be taken, and the same is true in our Bayesian setting.

No prior-dependent Bayes regret lower bound exists in linear bandits. Thus we treat $\E{1 / \Delta_{\min}^\varepsilon}$ in \cref{thm:linear bayesucb} as the complexity term and do not further bound it as in \cref{thm:prior-dependent bayesucb}. To compare our bound fairly to existing $\tilde{O}(\sqrt{n})$ bounds, we derive an $\tilde{O}(\sqrt{n})$ bound in \cref{sec:gap-free bound}, by a relatively minor change in the proof of \cref{thm:linear bayesucb}. A similar bound can be obtained by adapting the proofs of \citet{lu19informationtheoretic} and \citet{hong22hierarchical} to a linear bandit with a finite number of actions. The leading term of the bound is
\begin{align}
  2 \sqrt{\frac{2 \sigma_{0, \max}^2 d n}
  {\log\left(1 + \frac{\sigma_{0, \max}^2}{\sigma^2}\right)}
  \log\left(1 + \frac{\sigma_{0, \max}^2 n}{\sigma^2 d}\right) \log(1 / \delta)}\,.
  \label{eq:linear sqrt bound}
\end{align}
Similarly to \cref{thm:linear bayesucb}, \eqref{eq:linear sqrt bound} decreases as the prior becomes more informative, $\sigma_{0, \max} \to 0$. However, the bound is $\tilde{O}(\sqrt{n})$ and does not depend on the gaps of random bandit instances. Hence it cannot capture low-regret regimes due to a large random minimum gap $\Delta_{\min}$ in \cref{thm:linear bayesucb}. We validate it empirically in \cref{sec:experiments}.

\subsection{Technical Novelty}
\label{sec:technical novelty}

All modern Bayesian analyses follow \citet{russo14learning}, who derived the first finite-time $\tilde{O}(\sqrt{n})$ Bayes regret bounds for \bayesucb and Thompson sampling. The key idea in their analyses is that conditioned on history, the optimal and taken actions are identically distributed, and that the upper confidence bounds are deterministic functions of the history. This is where the randomness of instances in Bayesian bandits is used. Using this, the regret at round $t$ is bounded by the confidence interval width of the taken action, and the usual $\tilde{O}(\sqrt{n})$ bounds can be obtained by summing up the confidence interval widths over $n$ rounds.

The main difference in our work is that we first bound the regret in a fixed bandit instance, similarly to frequentist analyses. The bound involves $\Delta_a^{-1}$ and is derived using biased Bayesian confidence intervals. The rest of our analysis is Bayesian in two parts: we prove that the confidence intervals fail with a low probability and bound random $\Delta_a^{-1}$, following a similar technique to \citet{lai87adaptive}. The resulting logarithmic Bayes regret bounds cannot be derived using the techniques of \citet{russo14learning}, as these become loose when the confidence interval widths are introduced.

Asymptotic logarithmic Bayes regret bounds were derived in \citet{lai87adaptive}. From this analysis, we use only the technique for bounding $\Delta_a^{-1}$ when proving \cref{thm:prior-dependent bayesucb,thm:bernoulli bayesucb}. The central part of our proof is a finite-time per-instance bound on the number of times that a suboptimal action is taken. This quantity is bounded based on the assumption that the action with the highest UCB is taken. A comparable argument in Theorem 2 of \citet{lai87adaptive} is asymptotic and on average over random bandit instances.

\section{Experiments}
\label{sec:experiments}

We experiment with UCB algorithms in two environments: Gaussian bandits (\cref{sec:gaussian bandit}) and linear bandits with Gaussian rewards (\cref{sec:linear bandit}). In both experiments, the horizon is $n = 1\,000$ rounds. All results are averaged over $10\,000$ random runs. Shaded regions in the plots are standard errors of the estimates. They are generally small because the number of runs is high.

\subsection{Gaussian Bandit}
\label{sec:mab experiments}

\begin{figure}[t]
  \centering
  \includegraphics[width=5.5in]{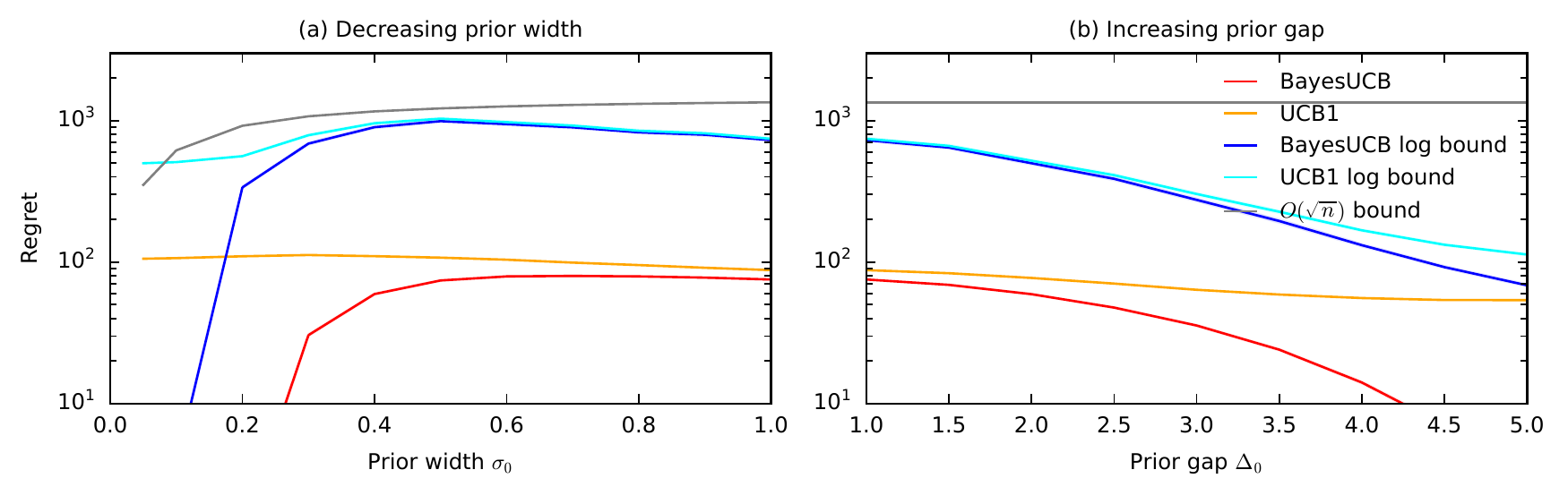}
  \vspace{-0.3in}
  \caption{Gaussian bandit as (a) the prior width $\sigma_0$ and (b) the prior gap $\Delta_0$ change.}
  \label{fig:mab experiments}
\end{figure}

The first problem is a $K$-armed bandit with $K = 10$ actions (\cref{sec:gaussian bandit}). The \emph{prior width} is $\sigma_0 = 1$. The prior mean is $\mu_0$, where $\mu_{0, 1} = \Delta_0$ and $\mu_{0, a} = 0$ for $a > 1$. We set $\Delta_0 = 1$ and call it the \emph{prior gap}. We vary $\sigma_0$ and $\Delta_0$ in our experiments, and observe how the regret and its upper bounds change as the problem hardness varies (\cref{sec:regret bound bayesucb,sec:prior bayes regret upper bounds}).

We plot five trends: (a) Bayes regret of \bayesucb. (b) Bayes regret of \ucb (\cref{sec:regret bound ucb1}), which is a comparable frequentist algorithm to \bayesucb. (c) A leading term of the \bayesucb regret bound in \cref{thm:gap-dependent bayesucb}, where $\varepsilon = 1 / n$ and $\delta = 1 / n$. (d) A leading term of the \ucb regret bound: This is the same as (c) with $\sigma_0 = \infty$. (e) An existing $\tilde{O}(\sqrt{n})$ regret bound in \eqref{eq:sqrt bound}.

Our results are reported in \cref{fig:mab experiments}. We observe three major trends. First, the regret of \bayesucb decreases as the problem becomes easier, either $\sigma_0 \to 0$ or $\Delta_0 \to \infty$. It is also lower than that of \ucb, which does not leverage the prior. Second, the regret bound of \bayesucb is tighter than that of \ucb, due to capturing the benefit of the prior. Finally, the logarithmic regret bounds are tighter than the $\tilde{O}(\sqrt{n})$ bound. In addition, the $\tilde{O}(\sqrt{n})$ bound depends on the prior only through $\sigma_0$ and thus remains constant as the prior gap $\Delta_0$ changes.

In \cref{sec:bakeoff}, we compare \bayesucb to \ucb more comprehensively for various $K$, $\sigma$, $\Delta_0$, and $\sigma_0$. In all experiments, \bayesucb has a lower regret than \ucb. This also happens when the noise is not Gaussian, which a testament to the robustness of Bayesian methods to model misspecification.

\subsection{Linear Bandit with Gaussian Rewards}
\label{sec:linear bandit experiments}

\begin{figure}[t]
  \centering
  \includegraphics[width=5.5in]{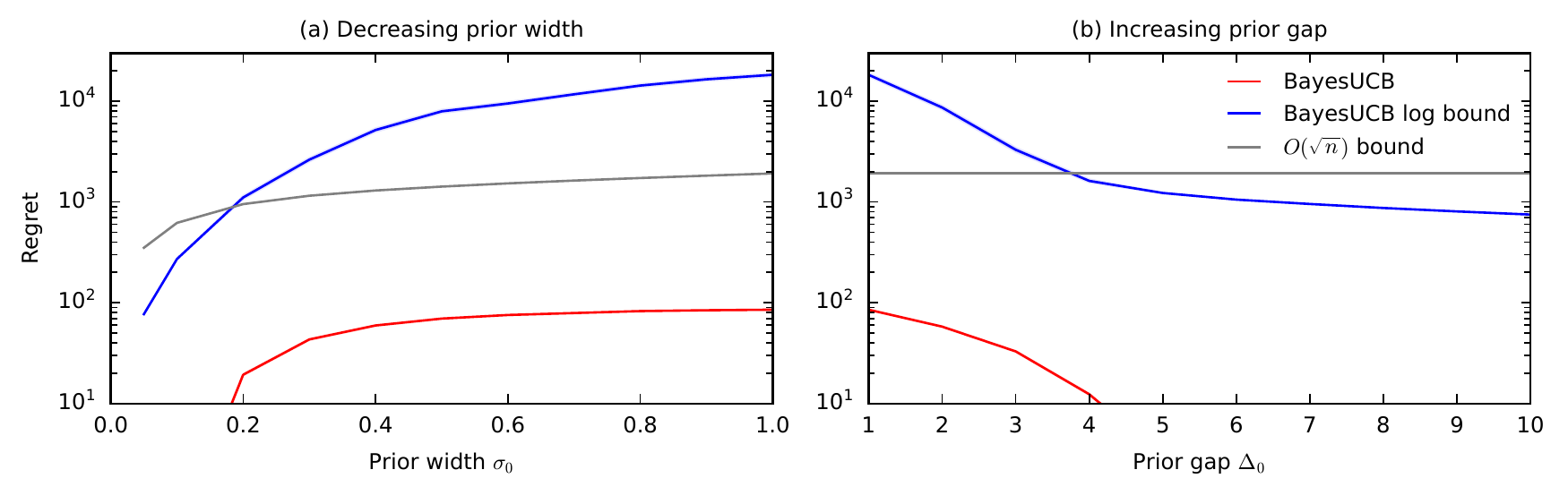}
  \vspace{-0.3in}
  \caption{Linear bandit as (a) the prior width $\sigma_0$ and (b) the prior gap $\Delta_0$ change.}
  \label{fig:linear experiments}
\end{figure}

The second problem is a linear bandit with $K = 30$ actions in $d = 10$ dimensions (\cref{sec:linear bandit}). The prior covariance is $\Sigma_0 = \sigma_0^2 I_d$. The prior mean is $\theta_0$, where $\theta_{0, 1} = \Delta_0$ and $\theta_{0, i} = -1$ for $i > 1$. As in \cref{sec:mab experiments}, we set $\Delta_0 = 1$ and call it the \emph{prior gap}. The action set $\cA$ is generated as follows. The first $d$ actions are the canonical basis in $\realset^d$. The remaining $K - d$ actions are sampled uniformly at random from the positive orthant and scaled to unit length. This ensures that the first action has the highest mean reward, of $\Delta_0$, under the prior mean $\theta_0$. We vary $\sigma_0$ and $\Delta_0$, and observe how the regret and its upper bounds change as the problem hardness varies (\cref{sec:regret bound linear bayesucb,sec:prior bayes regret upper bounds}). We plot three trends: (a) Bayes regret of \bayesucb. (b) A leading term of the \bayesucb regret bound in \cref{thm:linear bayesucb}, where $\sigma_{0, \max} = \sigma_0$, $\varepsilon = 1 / n$, and $\delta = 1 / n$. (c) An existing $\tilde{O}(\sqrt{n})$ regret bound in \eqref{eq:linear sqrt bound}.

Our results are reported in \cref{fig:linear experiments} and we observe three major trends. First, the regret of \bayesucb decreases as the problem becomes easier, either $\sigma_0 \to 0$ or $\Delta_0 \to \infty$. Second, the regret bound of \bayesucb decreases as the problem becomes easier. Finally, our logarithmic regret bound can also be tighter than the $\tilde{O}(\sqrt{n})$ bound. In particular, the $\tilde{O}(\sqrt{n})$ bound depends on the prior only through $\sigma_0$, and thus remains constant as the prior gap $\Delta_0$ changes. We discuss when our bounds could be looser than $\tilde{O}(\sqrt{n})$ bounds in \cref{sec:prior bayes regret upper bounds}.

\section{Conclusions}
\label{sec:conclusions}

Finite-time logarithmic frequentist regret bounds are the standard way of analyzing $K$-armed bandits \citep{auer02finitetime,garivier11klucb,agrawal12analysis}. In our work, we prove the first comparable finite-time bounds, logarithmic in $n$, in the Bayesian setting. This is a major step in theory and a significant improvement upon prior $\tilde O(\sqrt{n})$ Bayes regret bounds that have become standard in the literature. Comparing to frequentist regret bounds, our bounds capture the value of prior information given to the learner. Our proof technique is general and we also apply it to linear bandits.

This work can be extended in many directions. First, our analyses only need closed-form posteriors, which are available for other reward distributions than Gaussian and Bernoulli. Second, our linear bandit analysis (\cref{sec:regret bound linear bayesucb}) seems preliminary when compared to our multi-armed bandit analyses. As an example, the complexity term $\E{1 / \Delta_{\min}^\varepsilon}$ in \cref{thm:linear bayesucb} could be bounded as in \cref{thm:prior-dependent bayesucb,thm:bernoulli bayesucb}. We do not do this because the main reason for deriving the $O(c_h \log^2 n)$ bound in \cref{thm:bernoulli bayesucb}, an upper bound on the corresponding $O(c_\Delta \log n)$ bound, is that it matches the lower bound in \eqref{eq:lower bound}. No such instance-dependent lower bound exists in Bayesian linear bandits. Third, we believe that our approach can be extended to information-theory bounds \citep{russo16information} and discuss it in \cref{sec:information-theory note}. Fourth, although we only analyze \bayesucb, we believe that similar guarantees can be obtained for Thompson sampling. Finally, we would like to extend our results to reinforcement learning, for instance by building on the work of \citet{lu19informationtheoretic}.

There have been recent attempts in theory \citep{wagenmaker23instanceoptimality} to design general adaptive algorithms with finite-time instance-dependent bounds based on optimal allocations. The promise of these methods is a higher statistical efficiency than exploring by optimism, which we adopt in this work. One of their shortcomings is that they are not guaranteed to be computationally efficient, as discussed in Section 2.2 of \citet{wagenmaker23instanceoptimality}. This work is also frequentist.

\bibliographystyle{plainnat}
\bibliography{References}

\clearpage
\onecolumn
\appendix

\section{Proofs}
\label{sec:proofs}

We present a general approach for deriving finite-time logarithmic Bayes regret bounds. We start with $K$-armed bandits and then extend it to linear bandits.

\subsection{Proof of \cref{thm:gap-dependent bayesucb}}
\label{sec:gap-dependent bayesucb proof}

Let $E_t = \set{\forall a \in \cA: |\theta_a - \hat{\theta}_{t, a}| \leq C_{t, a}}$ be the event that all confidence intervals at round $t$ hold. Fix $\varepsilon > 0$. We start with decomposing the $n$-round regret as
\begin{align}
  \sum_{t = 1}^n \E{\Delta_{A_t}}
  \leq {} & \sum_{t = 1}^n \E{\Delta_{A_t} \I{\Delta_{A_t} \geq \varepsilon, E_t}} +
  \sum_{t = 1}^n \E{\Delta_{A_t} \I{\Delta_{A_t} < \varepsilon}} + {}
  \label{eq:regret decomposition} \\
  & \sum_{t = 1}^n \E{\Delta_{A_t} \I{\bar{E}_t}}\,.
  \nonumber
\end{align} 
We bound the first term using the design of \bayesucb and its closed-form posteriors.

\textbf{Case 1: Event $E_t$ occurs and the gap is large, $\Delta_{A_t} \geq \varepsilon$.} Since $E_t$ occurs and the action with the highest UCB is taken, the regret at round $t$ can be bounded as
\begin{align*}
  \Delta_{A_t}
  = \theta_{A_*} - \theta_{A_t}
  \leq \theta_{A_*} - U_{t, A_*} + U_{t, A_t} - \theta_{A_t}
  \leq U_{t, A_t} - \theta_{A_t}
  \leq 2 C_{t, A_t}\,.
\end{align*}
In the second inequality, we use that $\theta_{A_*} \leq U_{t, A_*}$ on event $E_t$. This implies that on event $E_t$, action $a$ can be taken only if
\begin{align*}
  \Delta_a
  \leq 2 C_{t, a}
  = 2 \sqrt{2 \hat{\sigma}_{t, a}^2 \log(1 / \delta)}
  = 2 \sqrt{\frac{2 \log(1 / \delta)}{\sigma_0^{-2} + \sigma^{-2} N_{t, a}}}\,,
\end{align*}
which can be rearranged as
\begin{align*}
  N_{t, a}
  \leq \frac{8 \sigma^2 \log(1 / \delta)}{\Delta_a^2} - \sigma^2 \sigma_0^{-2}\,.
\end{align*}
Therefore, the number of times that action $a$ is taken in $n$ rounds while all confidence intervals hold, $N_a = \sum_{t = 1}^n \I{A_t = a, E_t}$, is bounded as
\begin{align}
  N_a
  \leq \frac{8 \sigma^2 \log(1 / \delta)}{\Delta_a^2} - \sigma^2 \sigma_0^{-2}\,.
  \label{eq:action bound}
\end{align}
Now we apply this inequality to bound the first term in \eqref{eq:regret decomposition} as
\begin{align}
  \sum_{t = 1}^n \E{\Delta_{A_t} \I{\Delta_{A_t} \geq \varepsilon, E_t}}
  \leq \E{\sum_{a \neq A_*}
  \left(\frac{8 \sigma^2 \log(1 / \delta)}{\Delta_a} - \sigma^2 \sigma_0^{-2} \Delta_a\right)
  \I{\Delta_a \geq \varepsilon}}\,.
  \label{eq:gap-dependent regret}
\end{align}

\textbf{Case 2: The gap is small, $\Delta_{A_t} < \varepsilon$.} Then naively $\sum_{t = 1}^n \E{\Delta_{A_t} \I{\Delta_{A_t} < \varepsilon}} < \varepsilon n$.

\textbf{Case 3: Event $E_t$ does not occur.} The last term in \eqref{eq:regret decomposition} can be bounded as
\begin{align}
  & \E{\Delta_{A_t} \I{\bar{E}_t}}
  \label{eq:bad events} \\
  & = \E{\condE{(\theta_{A_*} - \theta_{A_t}) \I{\bar{E}_t}}{H_t}}
  \nonumber \\
  & \leq \E{\condE{(\theta_{A_*} - U_{t, A_*}) \I{\bar{E}_t}}{H_t} +
  \condE{(U_{t, A_t} - \theta_{A_t}) \I{\bar{E}_t}}{H_t}}
  \nonumber \\
  & \leq \E{\condE{(\theta_{A_*} - \hat{\theta}_{t, A_*}) \I{\bar{E}_t}}{H_t} +
  \condE{(\hat{\theta}_{t, A_t} - \theta_{A_t}) \I{\bar{E}_t}}{H_t} +
  \condE{C_{t, A_t} \I{\bar{E}_t}}{H_t}}\,.
  \nonumber
\end{align}
To bound the resulting terms, we use that $\theta_a - \hat{\theta}_{t, a} \mid H_t \sim \cN(0, \hat{\sigma}_{t, a}^2)$.

\begin{lemma}
\label{lem:failure regret} For any action $a \in \cA$, round $t \in [n]$, and history $H_t$,
\begin{align*}
  \condE{(\theta_a - \hat{\theta}_{t, a}) \I{\bar{E}_t}}{H_t}
  \leq 2 \sigma_0 K \delta\,.
\end{align*}
\end{lemma}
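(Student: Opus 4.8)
The plan is to reduce everything to one-dimensional Gaussian tail computations, exploiting that conditioned on $H_t$ the coordinates $Z_{a'} := \theta_{a'} - \hat{\theta}_{t, a'}$ are independent, with $Z_{a'} \sim \cN(0, \hat{\sigma}_{t, a'}^2)$, since the Gaussian prior factors over actions and the likelihood decouples per action. First I would remove the sign issue by bounding $(\theta_a - \hat{\theta}_{t,a}) \I{\bar{E}_t} = Z_a \I{\bar{E}_t} \le \pospart{Z_a} \I{\bar{E}_t}$, and then apply a union bound to the failure event, $\I{\bar{E}_t} \le \sum_{a' \in \cA} \I{|Z_{a'}| > C_{t,a'}}$. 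Combining these gives $\condE{Z_a \I{\bar{E}_t}}{H_t} \le \sum_{a' \in \cA} \condE{\pospart{Z_a} \I{|Z_{a'}| > C_{t,a'}}}{H_t}$, which I would split into the diagonal term $a' = a$ and the $K - 1$ off-diagonal terms $a' \neq a$.

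For the diagonal term, note $\condE{\pospart{Z_a} \I{|Z_a| > C_{t,a}}}{H_t} = \condE{Z_a \I{Z_a > C_{t,a}}}{H_t}$, and the truncated-Gaussian identity $\E{Z \I{Z > c}} = \hat{\sigma}_{t,a}\, \phi(c / \hat{\sigma}_{t,a})$ for $Z \sim \cN(0, \hat{\sigma}_{t,a}^2)$ (with $\phi$ the standard normal density), together with the choice $C_{t,a} = \sqrt{2 \hat{\sigma}_{t,a}^2 \log(1/\delta)}$, yields exactly $\hat{\sigma}_{t,a}\, \delta / \sqrt{2\pi}$. For each off-diagonal term, conditional independence of $Z_a$ and $Z_{a'}$ factorizes the expectation into $\condE{\pospart{Z_a}}{H_t}\, \condprob{|Z_{a'}| > C_{t,a'}}{H_t}$; here $\condE{\pospart{Z_a}}{H_t} = \hat{\sigma}_{t,a}/\sqrt{2\pi}$ and the Gaussian tail bound gives $\condprob{|Z_{a'}| > C_{t,a'}}{H_t} \le 2\delta$. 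Using the posterior-variance bound $\hat{\sigma}_{t,a} = (\sigma_0^{-2} + \sigma^{-2} N_{t,a})^{-1/2} \le \sigma_0$ throughout, every term is at most $2 \sigma_0 \delta / \sqrt{2\pi}$ (and the diagonal one smaller still), so the total is at most $(2K - 1)\,\sigma_0 \delta / \sqrt{2\pi} \le 2 \sigma_0 K \delta$, since $\sqrt{2\pi} > 1$.

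The routine parts are the two Gaussian moment computations; the only step needing care is the conditional independence across actions, which is what lets the off-diagonal expectations factor. Without it one cannot separate $\pospart{Z_a}$ from the tail event of $Z_{a'}$, and would be left with a constant (non-$\delta$) bound coming from $\condE{\pospart{Z_a}}{H_t}$. I expect this factorization, and the matching of $C_{t,a}$ to the Gaussian normalizing constant so that both the tail term and the truncated-mean term produce a clean factor of $\delta$, to be the crux of the argument; the final constant $2\sigma_0 K \delta$ is deliberately loose so as to absorb the $1/\sqrt{2\pi}$ factor and the slack between $2K - 1$ and $2K$.
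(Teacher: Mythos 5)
Your proposal is correct and follows essentially the same route as the paper's proof: a union bound decomposing $\bar{E}_t$ into per-action failure events, a truncated-Gaussian mean computation for the $a'=a$ term, and conditional independence across arms to factorize the $a'\neq a$ terms into a posterior first moment times a $2\delta$ tail probability. The only difference is cosmetic — you keep $\pospart{Z_a}$ and the $1/\sqrt{2\pi}$ factors where the paper uses $|Z_a|$ and rounds them away earlier — which yields the same bound with slightly tighter intermediate constants.
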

\begin{proof}
Let $E_{t, a} = \set{|\theta_a - \hat{\theta}_{t, a}| \leq C_{t, a}}$. We start with decomposing $\bar{E}_t$ into individual $\bar{E}_{t, a}$ as
\begin{align*}
  \condE{(\theta_a - \hat{\theta}_{t, a}) \I{\bar{E}_t}}{H_t}
  \leq \condE{|\theta_a - \hat{\theta}_{t, a}| \I{\bar{E}_{t, a}}}{H_t} +
  \sum_{a' \neq a} \condE{|\theta_a - \hat{\theta}_{t, a}| \I{\bar{E}_{t, a'}}}{H_t}\,.
\end{align*}
To bound the first term, we use that $\theta_a - \hat{\theta}_{t, a} \mid H_t \sim \cN(0, \hat{\sigma}_{t, a}^2)$. Thus
\begin{align*}
  \condE{|\theta_a - \hat{\theta}_{t, a}| \I{\bar{E}_{t, a}}}{H_t}
  & \leq \frac{2}{\sqrt{2 \pi \hat{\sigma}_{t, a}^2}}
  \int_{x = C_{t, a}}^\infty
  x \exp\left[- \frac{x^2}{2 \hat{\sigma}_{t, a}^2}\right] \dif x \\
  & = - \sqrt{\frac{2 \hat{\sigma}_{t, a}^2}{\pi}}
  \int_{x = C_{t, a}}^\infty \frac{\partial}{\partial x}
  \left(\exp\left[- \frac{x^2}{2 \hat{\sigma}_{t, a}^2}\right]\right) \dif x \\
  & = \sqrt{\frac{2 \hat{\sigma}_{t, a}^2}{\pi}} \delta
  \leq \sigma_0 \delta\,.
\end{align*}
To bound the second term, we use the independence of the distributions for $a$ and $a'$,
\begin{align*}
  \condE{|\theta_a - \hat{\theta}_{t, a}| \I{\bar{E}_{t, a'}}}{H_t}
  = \condE{|\theta_a - \hat{\theta}_{t, a}|}{H_t} \condprob{\bar{E}_{t, a'}}{H_t}\,.
\end{align*}
The probability is at most $2 \delta$ and the expectation can be bounded as
\begin{align*}
  \condE{|\theta_a - \hat{\theta}_{t, a}|}{H_t}
  = \condE{\sqrt{(\theta_a - \hat{\theta}_{t, a})^2}}{H_t}
  \leq \sqrt{\condE{(\theta_a - \hat{\theta}_{t, a})^2}{H_t}}
  = \hat{\sigma}_{t, a}
  \leq \sigma_0\,.
\end{align*}
This completes the proof.
\end{proof}

The first two terms in \eqref{eq:bad events} can be bounded using a union bound over $a \in \cA$ and \cref{lem:failure regret}. For the last term, we use that $C_{t, a} \leq \sqrt{2 \sigma_0^2 \log(1 / \delta)}$ and a union bound in $\condprob{\bar{E}_t}{H_t}$ to get
\begin{align*}
  \condE{C_{t, A_t} \I{\bar{E}_t}}{H_t}
  \leq \sqrt{2 \sigma_0^2 \log(1 / \delta)} \condprob{\bar{E}_t}{H_t}
  \leq 2 \sqrt{2 \log(1 / \delta)} \sigma_0 K \delta\,.
\end{align*}
Finally, we sum up the upper bounds on \eqref{eq:bad events} over all rounds $t \in [n]$.

\subsection{Proof of \cref{thm:prior-dependent bayesucb}}
\label{sec:prior-dependent bayesucb proof}

The key idea in the proof is to integrate out the random gap in \eqref{eq:gap-dependent regret}. Fix action $a \in \cA$ and thresholds $\varepsilon_2 > \varepsilon > 0$. We consider two cases.

\textbf{Case 1: Diminishing gaps $\varepsilon < \Delta_a \leq \varepsilon_2$.} Let
\begin{align*}
  \xi_a(\theta_a^*)
  = \argmax_{x \in [\varepsilon, \varepsilon_2]} h_a(\theta_a^* - x)
\end{align*}
and $N_a$ be defined as in \cref{sec:gap-dependent bayesucb proof}. Then
\begin{align*}
  \E{\Delta_a N_a \I{\varepsilon < \Delta_a < \varepsilon_2}}
  & = \int_{\theta_{- a}}
  \int_{\theta_a = \theta_a^* - \varepsilon_2}^{\theta_a^* - \varepsilon}
  \Delta_a N_a h_a(\theta_a) \dif \theta_a \,
  h_{- a}(\theta_{- a}) \dif \theta_{- a} \\
  & \leq \int_{\theta_{- a}}
  \left(\int_{\theta_a = \theta_a^* - \varepsilon_2}^{\theta_a^* - \varepsilon}
  \Delta_a N_a \dif \theta_a\right)
  h_a(\theta_a^* - \xi_a(\theta_a^*)) \, h_{- a}(\theta_{- a}) \dif \theta_{- a}\,,
\end{align*}
where the inequality is by the definition of $\xi_a$. Now the inner integral is independent of $h_a$ and thus can be easily bounded. Specifically, the upper bound in \eqref{eq:action bound} and simple integration yield
\begin{align*}
  \int_{\theta_a = \theta_a^* - \varepsilon_2}^{\theta_a^* - \varepsilon}
  \Delta_a N_a \dif \theta_a
  & \leq \int_{\theta_a = \theta_a^* - \varepsilon_2}^{\theta_a^* - \varepsilon}
  \left(\frac{8 \sigma^2 \log(1 / \delta)}{\theta_a^* - \theta_a} -
  \sigma^2 \sigma_0^{-2} (\theta_a^* - \theta_a)\right) \dif \theta_a \\
  & = 8 \sigma^2 \log(1 / \delta) (\log \varepsilon_2 - \log \varepsilon) -
  \frac{\sigma^2 (\varepsilon_2^2- \varepsilon^2)}{2 \sigma_0^2}\,.
\end{align*}
For $\varepsilon = 1 / n$ and $\varepsilon_2 = 1 / \sqrt{\log n}$, we get
\begin{align}
  \int_{\theta_a = \theta_a^* - \varepsilon_2}^{\theta_a^* - \varepsilon}
  \Delta_a N_a \dif \theta_a
  & \leq 8 \sigma^2 \log(1 / \delta) \log n - \frac{\sigma^2}{2 \sigma_0^2 \log n} +
  \frac{\sigma^2}{2 \sigma_0^2 n^2} - 4 \sigma^2 \log(1 / \delta) \log \log n
  \label{eq:diminishing gaps} \\
  & \leq 8 \sigma^2 \log(1 / \delta) \log n - \frac{\sigma^2}{2 \sigma_0^2 \log n}\,.
  \nonumber
\end{align}
The last inequality holds for $\sigma_0^2 \geq \frac{1}{8 \log(1 / \delta) \, n^2 \log \log n}$.

\textbf{Case 2: Large gaps $\Delta_a > \varepsilon_2$.} Here we use \eqref{eq:action bound} together with $\varepsilon_2 = 1 / \sqrt{\log n}$ to get
\begin{align}
  \E{\Delta_a N_a \I{\Delta_a > \varepsilon_2}}
  \leq \E{\frac{8 \sigma^2 \log(1 / \delta)}{\Delta_a} \I{\Delta_a > \varepsilon_2}}
  < 8 \sigma^2 \log(1 / \delta) \sqrt{\log n}\,.
  \label{eq:large gaps}
\end{align} 
Finally, we chain all inequalities.

\subsection{Proof of \cref{lem:interpretability}}
\label{sec:interpretability proof}

We have that
\begin{align*}
  & \sum_{a \in \cA} \int_{\theta_{- a}}
  h_a(\theta_a^*) \, h_{- a}(\theta_{- a}) \dif \theta_{- a} \\
  & \quad \leq \sum_{a \in \cA} \int_{\theta_{- a}}
  \Bigg(\sum_{a' \neq a} h_a(\theta_{a'})\Bigg)
  \Bigg(\prod_{a' \neq a} h_{a'}(\theta_{a'})\Bigg) \dif \theta_{- a} \\
  & \quad = \sum_{a \in \cA} \sum_{a' \neq a} \int_{\theta_{a'}}
  h_a(\theta_{a'}) \, h_{a'}(\theta_{a'}) \dif \theta_{a'} \\
  & \quad = \frac{1}{2 \pi \sigma_0^2}
  \sum_{a \in \cA} \sum_{a' \neq a} \int_{\theta_{a'}}
  \exp\left[- \frac{(\theta_{a'} - \mu_{0, a})^2}{2 \sigma_0^2} -
  \frac{(\theta_{a'} - \mu_{0, a'})^2}{2 \sigma_0^2}\right] \dif \theta_{a'} \\
  & \quad = \frac{1}{2 \sqrt{\pi \sigma_0^2}} \sum_{a \in \cA} \sum_{a' \neq a}
  \exp\left[- \frac{(\mu_{0, a} - \mu_{0, a'})^2}{4 \sigma_0^2}\right]\,,
\end{align*}
where the last step is by completing the square and integrating out $\theta_{a'}$.

\subsection{Proof of \cref{thm:ucb1}}
\label{sec:ucb1 proof}

The regret bound of \ucb is proved similarly to \cref{thm:gap-dependent bayesucb,thm:prior-dependent bayesucb}. This is because \ucb can be viewed as \bayesucb where $\sigma_0 = \infty$ and each action $a \in \cA$ is initially taken once at round $t = a$. Since $\sigma_0 = \infty$, the confidence interval becomes
\begin{align*}
  C_{t, a}
  = \sqrt{\frac{2 \sigma^2 \log(1 / \delta)}{N_{t, a}}}\,.
\end{align*}
The proof differs in two steps. First, the regret in the first $K$ rounds is bounded by $\sum_{a \in \cA} \E{\Delta_a}$. Second, the concentration argument (Case 3 in \cref{sec:gap-dependent bayesucb proof}) changes because the bandit instance $\theta$ is fixed and the estimated model parameter $\hat{\theta}_t$ is random. We detail it below. 

\textbf{Case 3: Event $E_t$ does not occur.} The last term in \eqref{eq:regret decomposition} can be bounded as
\begin{align}
  & \E{\Delta_{A_t} \I{\bar{E}_t}}
  \label{eq:bad events ucb1} \\
  & = \E{\condE{(\theta_{A_*} - \theta_{A_t}) \I{\bar{E}_t}}{\theta}}
  \nonumber \\
  & \leq \E{\condE{(\theta_{A_*} - U_{t, A_*}) \I{\bar{E}_t}}{\theta} +
  \condE{(U_{t, A_t} - \theta_{A_t}) \I{\bar{E}_t}}{\theta}}
  \nonumber \\
  & \leq \E{\condE{(\theta_{A_*} - \hat{\theta}_{t, A_*}) \I{\bar{E}_t}}{\theta} +
  \condE{(\hat{\theta}_{t, A_t} - \theta_{A_t}) \I{\bar{E}_t}}{\theta} +
  \condE{C_{t, A_t} \I{\bar{E}_t}}{\theta}}\,.
  \nonumber
\end{align}
To bound the resulting terms, we use that $\theta_a - \hat{\theta}_{t, a} \mid \theta \sim \cN(0, \sigma^2 / N_{t, a})$.

\begin{lemma}
\label{lem:failure regret ucb1} For any action $a \in \cA$, round $t > K$, and $N_{t, a} \geq 1$,
\begin{align*}
  \condE{(\theta_a - \hat{\theta}_{t, a}) \I{\bar{E}_t}}{\theta}
  \leq 2 \sigma K \delta\,.
\end{align*}
\end{lemma}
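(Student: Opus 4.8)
The plan is to mirror the proof of \cref{lem:failure regret} essentially verbatim in structure, making a single conceptual substitution: condition on the fixed instance $\theta$ in place of the history $H_t$, and use the frequentist sampling law $\theta_a - \hat\theta_{t,a} \mid \theta \sim \cN(0, \sigma^2 / N_{t,a})$ in place of the posterior $\cN(0, \hat\sigma_{t,a}^2)$. Concretely, writing $E_{t, a} = \set{|\theta_a - \hat{\theta}_{t, a}| \leq C_{t, a}}$ and applying a union bound to replace $\I{\bar E_t}$ by $\I{\bar E_{t,a}} + \sum_{a' \neq a} \I{\bar E_{t, a'}}$, I would split the left-hand side into an \say{own} term involving $\bar E_{t,a}$ and $K-1$ cross terms involving $\bar E_{t,a'}$ for $a' \neq a$.

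For the own term I would repeat the Gaussian tail-integral computation of \cref{lem:failure regret}. Since $\theta_a - \hat\theta_{t,a} \mid \theta$ is centered Gaussian with standard deviation $\sigma / \sqrt{N_{t,a}}$ and $C_{t,a} = \sqrt{2 \sigma^2 \log(1/\delta)/N_{t,a}}$ is calibrated so that $\exp[-C_{t,a}^2 N_{t,a}/(2\sigma^2)] = \delta$, the first-moment-over-a-tail identity $\E{|Z| \I{|Z| > c}} = \sqrt{2/\pi}\, s\, \exp[-c^2/(2 s^2)]$ for $Z \sim \cN(0, s^2)$ yields a bound of $\sqrt{2 \sigma^2/(\pi N_{t,a})}\, \delta \leq \sigma \delta$, where the last step uses $N_{t,a} \geq 1$. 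This is identical to the Bayesian computation with $\hat\sigma_{t,a} \le \sigma_0$ replaced by $\sigma/\sqrt{N_{t,a}} \le \sigma$.

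The cross terms are the crux and the main obstacle. In \cref{lem:failure regret} the factorization $\condE{|\theta_a - \hat{\theta}_{t, a}| \I{\bar{E}_{t, a'}}}{H_t} = \condE{|\theta_a - \hat{\theta}_{t, a}|}{H_t}\, \condprob{\bar{E}_{t, a'}}{H_t}$ was exact, because the factored Gaussian prior makes the posteriors of $\theta_a$ and $\theta_{a'}$ independent given $H_t$. Conditioning on the fixed $\theta$ instead, the reward streams of distinct arms are independent, but $\hat\theta_{t,a}$, $\hat\theta_{t,a'}$, and the event $\bar E_{t,a'}$ all depend on the shared adaptive trajectory through the data-dependent counts $N_{t,a}, N_{t,a'}$, so the two factors are not literally independent. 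I would resolve this by conditioning additionally on the pull counts, equivalently by adopting a reward-table coupling in which arm $a$'s samples and arm $a'$'s samples are independent given $\theta$: at any fixed value of $N_{t,a'}$ the calibrated width gives $\condprob{\bar E_{t,a'}}{\theta} \le 2\delta$ uniformly, while by Jensen $\condE{|\theta_a - \hat\theta_{t,a}|}{\theta} \le \sigma/\sqrt{N_{t,a}} \le \sigma$. Summing the $K-1$ cross terms then gives $2(K-1)\sigma\delta$, and adding the own term yields $\sigma\delta + 2(K-1)\sigma\delta \le 2K\sigma\delta$, as claimed.

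The reason this decoupling cannot be bypassed by a cruder inequality is that Cauchy--Schwarz would only produce $\sigma\sqrt{2\delta}$, which is far too lossy for the small $\delta$ regime of interest (e.g.\ $\delta = 1/n$); the argument genuinely needs the product structure rather than $\sqrt{\prob{\bar E_{t,a'} \mid \theta}}$. Thus the delicate point is making the cross-arm independence rigorous under adaptive sampling, which is exactly where the frequentist proof departs from its Bayesian counterpart, and I would spend the care there while the remaining manipulations are routine copies of \cref{sec:gap-dependent bayesucb proof}.
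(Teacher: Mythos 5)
Your proposal is correct and follows essentially the same route as the paper's proof of \cref{lem:failure regret ucb1}: the same union-bound decomposition of $\bar{E}_t$ into the per-action events $\bar{E}_{t, a'}$, the same Gaussian tail integral giving $\sigma \delta$ for the own term (using $N_{t, a} \geq 1$), and the same factorization-plus-Jensen bound of $2 \sigma \delta$ for each of the $K - 1$ cross terms, summing to $2 \sigma K \delta$. The only difference is that you explicitly flag and attempt to patch the adaptivity issue in the cross-term factorization (that $N_{t, a}$ and $N_{t, a'}$ are data-dependent, so the two factors are not literally independent), a point the paper dispatches with a one-line appeal to \say{the independence of the distributions for $a$ and $a'$}.
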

\begin{proof}
Let $E_{t, a} = \set{|\theta_a - \hat{\theta}_{t, a}| \leq C_{t, a}}$. We start with decomposing $\bar{E}_t$ into individual $\bar{E}_{t, a}$ as
\begin{align*}
  \condE{(\theta_a - \hat{\theta}_{t, a}) \I{\bar{E}_t}}{\theta}
  \leq \condE{|\theta_a - \hat{\theta}_{t, a}| \I{\bar{E}_{t, a}}}{\theta} +
  \sum_{a' \neq a} \condE{|\theta_a - \hat{\theta}_{t, a}| \I{\bar{E}_{t, a'}}}{\theta}\,.
\end{align*}
To bound the first term, we use that $\theta_a - \hat{\theta}_{t, a} \mid \theta \sim \cN(0, \sigma^2 / N_{t, a})$. Thus
\begin{align*}
  \condE{|\theta_a - \hat{\theta}_{t, a}| \I{\bar{E}_t}}{\theta}
  & \leq \frac{2}{\sqrt{2 \pi \sigma^2 / N_{t, a}}}
  \int_{x = C_{t, a}}^\infty
  x \exp\left[- \frac{x^2}{2 \sigma^2 / N_{t, a}}\right] \dif x \\
  & = - \sqrt{\frac{2 \sigma^2}{\pi N_{t, a}}}
  \int_{x = C_{t, a}}^\infty \frac{\partial}{\partial x}
  \left(\exp\left[- \frac{x^2}{2 \sigma^2 / N_{t, a}}\right]\right) \dif x \\
  & = \sqrt{\frac{2 \sigma^2}{\pi N_{t, a}}} \delta
  \leq \sigma \delta\,.
\end{align*}
To bound the second term, we use the independence of the distributions for $a$ and $a'$,
\begin{align*}
  \condE{|\theta_a - \hat{\theta}_{t, a}| \I{\bar{E}_{t, a'}}}{\theta}
  = \condE{|\theta_a - \hat{\theta}_{t, a}|}{\theta} \condprob{\bar{E}_{t, a'}}{\theta}\,.
\end{align*}
The probability is at most $2 \delta$ and the expectation can be bounded as
\begin{align*}
  \condE{|\theta_a - \hat{\theta}_{t, a}|}{\theta}
  = \condE{\sqrt{(\theta_a - \hat{\theta}_{t, a})^2}}{\theta}
  \leq \sqrt{\condE{(\theta_a - \hat{\theta}_{t, a})^2}{\theta}}
  = \sqrt{\frac{\sigma^2}{N_{t, a}}}
  \leq \sigma\,.
\end{align*}
This completes the proof.
\end{proof}

The first two terms in \eqref{eq:bad events ucb1} can be bounded using a union bound over $a \in \cA$ and \cref{lem:failure regret ucb1}. For the last term, we use that $C_{t, a} \leq \sqrt{2 \sigma^2 \log(1 / \delta)}$ and a union bound in $\condprob{\bar{E}_t}{\theta}$ to get
\begin{align*}
  \condE{C_{t, A_t} \I{\bar{E}_t}}{\theta}
  \leq \sqrt{2 \sigma^2 \log(1 / \delta)} \condprob{\bar{E}_t}{\theta}
  \leq 2 \sqrt{2 \log(1 / \delta)} \sigma K \delta\,.
\end{align*}
Finally, we sum up the upper bounds on \eqref{eq:bad events ucb1} over all rounds $t \in [n]$.

\subsection{Proof of \cref{thm:bernoulli bayesucb}}
\label{sec:bernoulli bayesucb proof}

Let $E_t = \set{\forall a \in \cA: |\theta_a - \hat{\theta}_{t, a}| \leq C_{t, a}}$ be the event that all confidence intervals at round $t$ hold. Fix $\varepsilon > 0$. We decompose the $n$-round regret as in \eqref{eq:regret decomposition} and then bound each resulting term next.

\textbf{Case 1: Event $E_t$ occurs and the gap is large, $\Delta_{A_t} \geq \varepsilon$.} As in \cref{sec:gap-dependent bayesucb proof},
\begin{align*}
  \Delta_{A_t}
  = \theta_{A_*} - \theta_{A_t}
  \leq \theta_{A_*} - U_{t, A_*} + U_{t, A_t} - \theta_{A_t}
  \leq U_{t, A_t} - \theta_{A_t}
  \leq 2 C_{t, A_t}\,.
\end{align*}
In the second inequality, we use that $\theta_{A_*} \leq U_{t, A_*}$ on event $E_t$. This implies that on event $E_t$, action $a$ can be taken only if
\begin{align*}
  N_{t, a}
  \leq \frac{2 \log(1 / \delta)}{\Delta_a^2} - (\alpha_a + \beta_a + 1)\,.
\end{align*}
Now we apply this inequality to bound the first term in \eqref{eq:regret decomposition} as
\begin{align*}
  \sum_{t = 1}^n \E{\Delta_{A_t} \I{\Delta_{A_t} \geq \varepsilon, E_t}}
  \leq \E{\sum_{a \neq A_*}
  \left(\frac{2 \log(1 / \delta)}{\Delta_a} - (\alpha_a + \beta_a + 1) \Delta_a\right)
  \I{\Delta_a \geq \varepsilon}}\,.
\end{align*}

\textbf{Case 2: The gap is small, $\Delta_{A_t} < \varepsilon$.} Then naively $\sum_{t = 1}^n \E{\Delta_{A_t} \I{\Delta_{A_t} < \varepsilon}} < \varepsilon n$.

\textbf{Case 3: Event $E_t$ does not occur.} Since $\theta_a \in [0, 1]$, the last term in \eqref{eq:regret decomposition} can be bounded as
\begin{align*}
  \E{\Delta_{A_t} \I{\bar{E}_t}}
  \leq \E{\condprob{\bar{E}_t}{H_t}}
  \leq 2 K \delta\,.
\end{align*}
This completes the first part of the proof.

The second claim is proved as in \cref{sec:prior-dependent bayesucb proof} and we only comment on what differs. For $\varepsilon = 1 / n$ and $\varepsilon_2 = 1 / \sqrt{\log n}$, \eqref{eq:diminishing gaps} becomes
\begin{align*}
  \int_{\theta_a = \theta_a^* - \varepsilon_2}^{\theta_a^* - \varepsilon}
  \Delta_a N_a \dif \theta_a
  & \leq \int_{\theta_a = \theta_a^* - \varepsilon_2}^{\theta_a^* - \varepsilon}
  \frac{2 \log(1 / \delta)}{\theta_a^* - \theta_a} -
  \lambda (\theta_a^* - \theta_a) \dif \theta_a \\
  & = 2 \log(1 / \delta) (\log \varepsilon_2 - \log \varepsilon) -
  \frac{\lambda (\varepsilon_2^2 - \varepsilon^2)}{2} \\
  & = 2 \log(1 / \delta) \log n - \frac{\lambda}{2 \log n} +
  \frac{\lambda}{2 n^2} - \log(1 / \delta) \log \log n \\
  & \leq 2 \log(1 / \delta) \log n - \frac{\lambda}{2 \log n}\,.
\end{align*}
The last inequality holds for $\lambda \leq 2 \log(1 / \delta) \, n^2 \log \log n$. Moreover, \eqref{eq:large gaps} becomes
\begin{align*}
  \E{\Delta_a N_a \I{\Delta_a > \varepsilon_2}}
  \leq \E{\frac{2 \log(1 / \delta)}{\Delta_a} \I{\Delta_a > \varepsilon_2}}
  < 2 \log(1 / \delta) \sqrt{\log n}\,.
\end{align*} 
This completes the second part of the proof.

\subsection{Proof of \cref{thm:linear bayesucb}}
\label{sec:linear bayesucb proof}

Let
\begin{align}
  E_t
  = \set{\forall a \in \cA: |a\T (\theta - \hat{\theta}_t)|
  \leq \sqrt{2 \log(1 / \delta)} \normw{a}{\hat{\Sigma}_t}}
  \label{eq:linear confidence interval}
\end{align}
be an event that high-probability confidence intervals for mean rewards at round $t$ hold. Our proof has three parts.

\textbf{Case 1: Event $E_t$ occurs and the gap is large, $\Delta_{A_t} \geq \varepsilon$.} Then
\begin{align*}
  \Delta_{A_t}
  & = \frac{1}{\Delta_{A_t}} \Delta_{A_t}^2
  \leq \frac{1}{\Delta_{\min}^\varepsilon}
  (A_*\T \theta - A_t\T \theta)^2
  \leq \frac{1}{\Delta_{\min}^\varepsilon}
  (A_*\T \theta - U_{t, A_*} + U_{t, A_t} - A_t\T \theta)^2 \\
  & \leq \frac{1}{\Delta_{\min}^\varepsilon} (U_{t, A_t} - A_t\T \theta)^2
  \leq \frac{4}{\Delta_{\min}^\varepsilon} C_{t, A_t}^2
  = \frac{8 \log(1 / \delta)}{\Delta_{\min}^\varepsilon}
  \normw{A_t}{\hat{\Sigma}_t}^2\,.
\end{align*}
The first inequality follows from definitions of $\Delta_{A_t}$ and $\Delta_{\min}^\varepsilon$; and that the gap is large, $\Delta_{A_t} \geq \varepsilon$. The second inequality holds because $A_*\T \theta - A_t\T \theta \geq 0$ by definition and $U_{t, A_t} - U_{t, A_*} \geq 0$ by the design of \bayesucb. The third inequality holds because $A_*\T \theta - U_{t, A_*} \leq 0$ on event $E_t$. Specifically, for any action $a \in \cA$ on event $E_t$,
\begin{align*}
  a\T \theta - U_{t, a}
  = a\T (\theta - \hat{\theta}_t) - C_{t, a}
  \leq C_{t, a} - C_{t, a}
  = 0\,.
\end{align*}
The last inequality follows from the definition of event $E_t$. Specifically, for any action $a \in \cA$ on event $E_t$,
\begin{align*}
  U_{t, a} - a\T \theta
  = a\T (\hat{\theta}_t - \theta) + C_{t, a}
  \leq C_{t, a} + C_{t, a}
  = 2 C_{t, a}\,.
\end{align*}

\textbf{Case 2: The gap is small, $\Delta_{A_t} \leq \varepsilon$.} Then naively $\Delta_{A_t} \leq \varepsilon$.

\textbf{Case 3: Event $E_t$ does not occur.} Then $\Delta_{A_t} \I{\bar{E}_t} \leq 2 \normw{A_t}{2} \normw{\theta}{2} \I{\bar{E}_t} \leq 2 L L_* \I{\bar{E}_t}$, where $2 L L_*$ is a trivial upper bound on $\Delta_{A_t}$. We bound the event in expectation as follows.

\begin{lemma}
\label{lem:failure probability} For any round $t \in [n]$ and history $H_t$, we have that $\condprob{\bar{E}_t}{H_t} \leq 2 K \delta$.
\end{lemma}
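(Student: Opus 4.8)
The plan is to exploit the closed-form Gaussian posterior from \cref{sec:linear bandit} together with a Gaussian tail bound applied per action, followed by a union bound over $\cA$. The key observation is that, conditioned on the history $H_t$, the model parameter is distributed as $\theta \mid H_t \sim \cN(\hat{\theta}_t, \hat{\Sigma}_t)$, exactly as in the multi-armed case (\cref{lem:failure regret}), except that now the randomness we must control is a linear projection of a multivariate Gaussian rather than a single coordinate.

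First I would fix an action $a \in \cA$ and define the per-action failure event $\bar{E}_{t, a} = \set{|a\T(\theta - \hat{\theta}_t)| > \sqrt{2 \log(1 / \delta)} \normw{a}{\hat{\Sigma}_t}}$, so that $\bar{E}_t = \bigcup_{a \in \cA} \bar{E}_{t, a}$. Since $a\T(\theta - \hat{\theta}_t)$ is a fixed linear functional of the Gaussian vector $\theta - \hat{\theta}_t \mid H_t \sim \cN(0, \hat{\Sigma}_t)$, it is itself Gaussian with mean zero and variance $a\T \hat{\Sigma}_t a = \normw{a}{\hat{\Sigma}_t}^2$. Standardizing, the quantity $Z_a = a\T(\theta - \hat{\theta}_t) / \normw{a}{\hat{\Sigma}_t}$ is a standard normal conditioned on $H_t$, and $\bar{E}_{t, a}$ is precisely the event $\set{|Z_a| > \sqrt{2 \log(1 / \delta)}}$.

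Next I would invoke the standard Gaussian tail bound $\prob{|Z| > x} \leq 2 \exp(-x^2 / 2)$ for $Z \sim \cN(0, 1)$. Plugging in $x = \sqrt{2 \log(1 / \delta)}$ gives $x^2 / 2 = \log(1 / \delta)$, so $\condprob{\bar{E}_{t, a}}{H_t} \leq 2 \exp(-\log(1 / \delta)) = 2 \delta$. This is the same constant that appears implicitly in the multi-armed analysis, where the confidence width $C_{t, a} = \sqrt{2 \log(1 / \delta)} \normw{a}{\hat{\Sigma}_t}$ is chosen exactly to yield a $2\delta$ two-sided tail. Finally, a union bound over the $K$ actions yields $\condprob{\bar{E}_t}{H_t} \leq \sum_{a \in \cA} \condprob{\bar{E}_{t, a}}{H_t} \leq 2 K \delta$, as claimed.

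I expect no substantive obstacle here: the result is essentially immediate once one recognizes that the marginal of the projection $a\T \theta$ under the posterior is Gaussian with variance matching the confidence width. The only point requiring minor care is verifying that the width $\sqrt{2 \log(1 / \delta)} \normw{a}{\hat{\Sigma}_t}$ in \eqref{eq:linear confidence interval} is calibrated to exactly the two-sided tail probability $2\delta$ rather than $\delta$, so that the union bound produces $2 K \delta$ and not $K \delta$; this factor of two propagates correctly into the low-order term $4 L L_* K n \delta$ of \cref{thm:linear bayesucb}.
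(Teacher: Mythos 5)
Your proof is correct and follows essentially the same route as the paper's: condition on $H_t$ to get $\theta - \hat{\theta}_t \mid H_t \sim \cN(\mathbf{0}_d, \hat{\Sigma}_t)$, standardize the projection $a\T(\theta - \hat{\theta}_t)/\normw{a}{\hat{\Sigma}_t}$ to a standard normal, bound its two-sided tail at $\sqrt{2\log(1/\delta)}$ by $2\delta$, and take a union bound over the $K$ actions. The only cosmetic difference is that you spell out the Gaussian tail inequality $\prob{|Z| > x} \leq 2 e^{-x^2/2}$ explicitly, whereas the paper leaves that step implicit.
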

\begin{proof}
First, note that for any history $H_t$,
\begin{align*}
  \condprob{\bar{E}_t}{H_t}
  \leq \sum_{a \in \cA}
  \condprob{|a\T (\theta - \hat{\theta}_t)|
  \geq \sqrt{2 \log(1 / \delta)} \normw{a}{\hat{\Sigma}_t}}{H_t}\,.
\end{align*}
By definition, $\theta - \hat{\theta}_t \mid H_t \sim \cN(\mathbf{0}_d, \hat{\Sigma}_t)$, and therefore $a\T (\theta - \hat{\theta}_t) / \normw{a}{\hat{\Sigma}_t} \mid H_t \sim \cN(0, 1)$ for any action $a \in \cA$. It immediately follows that
\begin{align*}
  \condprob{|a\T (\theta - \hat{\theta}_t)|
  \geq \sqrt{2 \log(1 / \delta)} \normw{a}{\hat{\Sigma}_t}}{H_t}
  \leq 2 \delta\,.
\end{align*}
This completes the proof.
\end{proof}

Finally, we chain all inequalities, add them over all rounds, and get
\begin{align*}
  R(n)
  \leq 8 \E{\frac{1}{\Delta_{\min}^\varepsilon}
  \sum_{t = 1}^n \normw{A_t}{\hat{\Sigma}_t}^2} \log(1 / \delta) +
  \varepsilon n + 4 L L_* K n \delta\,.
\end{align*}
The sum can bounded using a worst-case argument below, which yields our claim.

\begin{lemma}
\label{lem:posterior variances} The sum of posterior variances is bounded as
\begin{align*}
  \sum_{t = 1}^n \normw{A_t}{\hat{\Sigma}_t}^2
  \leq \frac{\sigma_{0, \max}^2 d}{\log\left(1 + \frac{\sigma_{0, \max}^2}{\sigma^2}\right)}
  \log\left(1 + \frac{\sigma_{0, \max}^2 n}{\sigma^2 d}\right)\,.
\end{align*}
\end{lemma}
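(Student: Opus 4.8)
The plan is to prove this via the standard elliptical-potential (log-determinant) argument from linear bandit analysis, applied to the posterior precision matrix. Write $V_t = \hat{\Sigma}_t^{-1} = \Sigma_0^{-1} + G_t$, so that $\normw{A_t}{\hat{\Sigma}_t}^2 = A_t\T V_t^{-1} A_t$ and $V_{t+1} = V_t + \sigma^{-2} A_t A_t\T$. The telescoping identity comes from the matrix determinant lemma: $\det V_{t+1} = \det V_t \, (1 + \sigma^{-2} A_t\T V_t^{-1} A_t)$, which after taking logarithms and summing over $t \in [n]$ gives $\sum_{t = 1}^n \log(1 + \sigma^{-2} \normw{A_t}{\hat{\Sigma}_t}^2) = \log \det V_{n+1} - \log \det V_1$.

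Next I would convert the sum of posterior variances into this sum of logarithms. Since $G_t \succeq 0$ we have $V_t \succeq \Sigma_0^{-1}$, hence $V_t^{-1} \preceq \Sigma_0$, and therefore $\normw{A_t}{\hat{\Sigma}_t}^2 \leq A_t\T \Sigma_0 A_t \leq \lambda_1(\Sigma_0) \normw{A_t}{2}^2 \leq \sigma_{0,\max}^2$. Thus each term $x_t := \sigma^{-2} \normw{A_t}{\hat{\Sigma}_t}^2$ lies in $[0, c]$ with $c = \sigma_{0,\max}^2 / \sigma^2$. The elementary inequality $x \leq \frac{c}{\log(1 + c)} \log(1 + x)$ for $x \in [0, c]$ — a consequence of the concavity of $\log(1 + \cdot)$, which forces it above its chord on $[0, c]$ — then yields $\sum_t \normw{A_t}{\hat{\Sigma}_t}^2 = \sigma^2 \sum_t x_t \leq \frac{\sigma_{0,\max}^2}{\log(1 + \sigma_{0,\max}^2 / \sigma^2)} (\log \det V_{n+1} - \log \det V_1)$.

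It remains to bound the log-determinant ratio. Since $V_1 = \Sigma_0^{-1}$, I would write $\det V_{n+1} / \det V_1 = \det(I + \Sigma_0 G_{n+1})$, whose eigenvalues coincide with those of the PSD matrix $\Sigma_0^{1/2} G_{n+1} \Sigma_0^{1/2}$. By the AM-GM inequality on its eigenvalues, $\det(I + \Sigma_0 G_{n+1}) \leq (1 + \trace(\Sigma_0 G_{n+1}) / d)^d$, and $\trace(\Sigma_0 G_{n+1}) = \sigma^{-2} \sum_{\ell = 1}^n A_\ell\T \Sigma_0 A_\ell \leq \sigma^{-2} n \sigma_{0,\max}^2$ by the same norm bound as above. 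Hence $\log \det V_{n+1} - \log \det V_1 \leq d \log(1 + \sigma_{0,\max}^2 n / (\sigma^2 d))$, and substituting into the previous display produces exactly the claimed bound.

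The main obstacle here is modest. The one genuinely load-bearing choice is identifying the uniform bound $c = \sigma_{0,\max}^2 / \sigma^2$ through $V_t^{-1} \preceq \Sigma_0$ and pairing it with the chord inequality so that the prefactor $\sigma_{0,\max}^2 / \log(1 + \sigma_{0,\max}^2 / \sigma^2)$ emerges with precisely the constant in the statement; a looser bound on $\normw{A_t}{\hat{\Sigma}_t}^2$ would spoil the matching constant. Everything else — the determinant telescoping and the trace/AM-GM step — is the textbook elliptical potential argument, and I expect it to go through as routine computation.
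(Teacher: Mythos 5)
Your proof is correct and follows essentially the same route as the paper's: a uniform bound $\normw{A_t}{\hat{\Sigma}_t}^2 \leq \sigma_{0,\max}^2$ (you via the Loewner ordering $\hat{\Sigma}_t \preceq \Sigma_0$, the paper via Weyl's inequalities — the same fact), the chord inequality $x \leq \frac{c}{\log(1+c)}\log(1+x)$ to convert variances into log-determinant increments, telescoping of $\log\det \hat{\Sigma}_t^{-1}$, and the determinant-vs-trace AM--GM step to reach $d\log(1 + \sigma_{0,\max}^2 n/(\sigma^2 d))$. The only cosmetic slip is the phrase that the eigenvalues of $I + \Sigma_0 G_{n+1}$ \emph{coincide} with those of $\Sigma_0^{1/2}G_{n+1}\Sigma_0^{1/2}$ (they are one plus those eigenvalues), but your subsequent AM--GM bound is stated correctly, so nothing is affected.
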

\begin{proof}
We start with an upper bound on the posterior variance of the mean reward estimate of any action. In any round $t \in [n]$, by Weyl's inequalities, we have
\begin{align*}
  \lambda_1(\hat{\Sigma}_t)
  = \lambda_1((\Sigma_0^{-1} + G_t)^{-1})
  = \lambda_d^{-1}(\Sigma_0^{-1} + G_t)
  \leq \lambda_d^{-1}(\Sigma_0^{-1})
  = \lambda_1(\Sigma_0)\,.
\end{align*}
Thus, when $\normw{a}{2} \leq L$ for any action $a \in \cA$, we have $\max_{a \in \cA} \normw{a}{\hat{\Sigma}_t} \leq \sqrt{\lambda_1(\Sigma_0)} L = \sigma_{0, \max}$.

Now we bound the sum of posterior variances $\sum_{t = 1}^n \normw{A_t}{\hat{\Sigma}_t}^2$. Fix round $t$ and note that
\begin{align}
  \normw{A_t}{\hat{\Sigma}_t}^2
  = \sigma^2 \frac{A_t\T \hat{\Sigma}_t A_t}{\sigma^2}
  \leq c_1 \log(1 + \sigma^{-2} A_t\T \hat{\Sigma}_t A_t)
  & = c_1 \log\det(I_d + \sigma^{-2}
  \hat{\Sigma}_t^\frac{1}{2} A_t A_t\T \hat{\Sigma}_t^\frac{1}{2})
  \label{eq:sequential proof decomposition}
\end{align}
for
\begin{align*}
  c_1
  = \frac{\sigma_{0, \max}^2}{\log(1 + \sigma^{-2} \sigma_{0, \max}^2)}\,.
\end{align*}
This upper bound is derived as follows. For any $x \in [0, u]$,
\begin{align*}
  x
  = \frac{x}{\log(1 + x)} \log(1 + x)
  \leq \left(\max_{x \in [0, u]} \frac{x}{\log(1 + x)}\right) \log(1 + x)
  = \frac{u}{\log(1 + u)} \log(1 + x)\,.
\end{align*}
Then we set $x = \sigma^{-2} A_t\T \hat{\Sigma}_t A_t$ and use the definition of $\sigma_{0, \max}$.

The next step is bounding the logarithmic term in \eqref{eq:sequential proof decomposition}, which can be rewritten as
\begin{align*}
  \log\det(I_d + \sigma^{-2}
  \hat{\Sigma}_t^\frac{1}{2} A_t A_t\T \hat{\Sigma}_t^\frac{1}{2})
  = \log\det(\hat{\Sigma}_t^{-1} + \sigma^{-2} A_t A_t\T) -
  \log\det(\hat{\Sigma}_t^{-1})\,.
\end{align*}
Because of that, when we sum over all rounds, we get telescoping and the total contribution of all terms is at most
\begin{align*}
  \sum_{t = 1}^n
  \log\det(I_d + \sigma^{-2} \hat{\Sigma}_t^\frac{1}{2} A_t
  A_t\T \hat{\Sigma}_t^\frac{1}{2})
  & = \log\det(\hat{\Sigma}_{n + 1}^{-1}) - \log\det(\hat{\Sigma}_1^{-1}) \\
  & = \log\det(\Sigma_0^\frac{1}{2} \hat{\Sigma}_{n + 1}^{-1} \Sigma_0^\frac{1}{2}) \\
  & \leq d \log\left(\frac{1}{d} \trace(\Sigma_0^\frac{1}{2} \hat{\Sigma}_{n + 1}^{-1}
  \Sigma_0^\frac{1}{2})\right) \\
  & = d \log\left(1 + \frac{1}{\sigma^2 d} \sum_{t = 1}^n
  \trace(\Sigma_0^\frac{1}{2} A_t A_t\T \Sigma_0^\frac{1}{2})\right) \\
  & = d \log\left(1 + \frac{1}{\sigma^2 d} \sum_{t = 1}^n
  A_t\T \Sigma_0 A_t\right) \\
  & \leq d \log\left(1 + \frac{\sigma_{0, \max}^2 n}{\sigma^2 d}\right)\,.
\end{align*}
This completes the proof.
\end{proof}

\section{Complete Statement of \cref{thm:prior-dependent bayesucb}}
\label{sec:complete prior-dependent regret bound}

\begin{theorem}
\label{thm:complete prior-dependent regret bound} Let $\sigma_0^2 \geq \frac{1}{8 \log(1 / \delta) \, n^2 \log \log n}$. Then there exist functions $\xi_a: \realset \to \left[\frac{1}{n}, \frac{1}{\sqrt{\log n}}\right]$ such that the $n$-round Bayes regret of \bayesucb in a $K$-armed Gaussian bandit is bounded as
\begin{align*}
  R(n)
  \leq \left[8 \sigma^2 \log(1 / \delta) \log n -
  \frac{\sigma^2}{2 \sigma_0^2 \log n}\right] \sum_{a \in \cA} \int_{\theta_{- a}}
  h_a(\theta_a^* - \xi_a(\theta_a^*)) \, h_{- a}(\theta_{- a}) \dif \theta_{- a} + C\,,
\end{align*}
where $C = 8 \sigma^2 K \log(1 / \delta) \sqrt{\log n} + (2 \sqrt{2 \log(1 / \delta)} + 1) \sigma_0 K n \delta + 1$ is a low-order term.

Moreover, when $\sigma_0^2 < \frac{1}{8 \log(1 / \delta) \, n^2 \log \log n}$, the regret is bounded as
\begin{align*}
  R(n)
  \leq \frac{2 \sqrt{2 \log(1 / \delta)} + 1}
  {\sqrt{8 \log(1 / \delta) \log \log n}} K \delta + 1\,.
\end{align*}
\end{theorem}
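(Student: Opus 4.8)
The plan is to treat the two regimes for $\sigma_0^2$ separately, reusing the machinery already developed for \cref{thm:gap-dependent bayesucb,thm:prior-dependent bayesucb}. The common starting point is the per-instance bound \eqref{eq:action bound}, $N_a \le 8\sigma^2\log(1/\delta)/\Delta_a^2 - \sigma^2\sigma_0^{-2}$ on the number of pulls of a suboptimal action $a$ while all confidence intervals hold, together with the regret decomposition \eqref{eq:regret decomposition} into a large-gap/good-event term, a small-gap term, and a confidence-failure term.

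For the first regime $\sigma_0^2 \ge \frac{1}{8\log(1/\delta)\,n^2\log\log n}$, I would run exactly the argument of \cref{thm:prior-dependent bayesucb}: fix $\varepsilon=1/n$ and $\varepsilon_2=1/\sqrt{\log n}$, integrate $\Delta_a N_a$ over the random gap via the intermediate-value functions $\xi_a$ as in \cref{sec:prior-dependent bayesucb proof}, and split on $\varepsilon<\Delta_a\le\varepsilon_2$ versus $\Delta_a>\varepsilon_2$. The only change is that I would bound the confidence-failure term (Case~3) by the sharper $(2\sqrt{2\log(1/\delta)}+1)\sigma_0 K n\delta$ promised in the remark after \cref{thm:gap-dependent bayesucb}, obtained by bounding the regret of the first pull of each action separately so that the $\sigma_0$-dependence is replaced by $\min\{\sigma_0,\sigma\}$. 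Chaining \eqref{eq:diminishing gaps} and \eqref{eq:large gaps} then reproduces the stated leading term and the low-order $C$.

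The interesting regime is $\sigma_0^2 < \frac{1}{8\log(1/\delta)\,n^2\log\log n}$, where the claim is that the regret collapses to $O(1)$. The key observation is that \eqref{eq:action bound} permits a pull of a suboptimal action on a good event only when $\Delta_a^2 \le 8\sigma_0^2\log(1/\delta)$; otherwise the right-hand side is negative and, since $N_a\ge 0$, we must have $N_a=0$. Writing $g=\sqrt{8\sigma_0^2\log(1/\delta)}$ for this threshold gap, the hypothesis on $\sigma_0^2$ gives $g < 1/(n\sqrt{\log\log n}) \le 1/n = \varepsilon$ once $\log\log n \ge 1$. Hence every suboptimal action with $\Delta_a \ge \varepsilon$ satisfies $N_a = 0$, so the large-gap/good-event term (Case~1) of \eqref{eq:regret decomposition} is identically zero. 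Only the small-gap term, bounded by $\varepsilon n = 1$, and the confidence-failure term survive. Substituting $\sigma_0 < 1/(n\sqrt{8\log(1/\delta)\log\log n})$ into the Case~3 bound $(2\sqrt{2\log(1/\delta)}+1)\sigma_0 K n\delta$ cancels the factor $n$ and yields $\frac{2\sqrt{2\log(1/\delta)}+1}{\sqrt{8\log(1/\delta)\log\log n}}K\delta$, which together with the $+1$ gives the stated bound.

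I expect the main obstacle to be bookkeeping rather than conceptual. First, I must verify the threshold comparison $g<\varepsilon$ over the relevant range of $n$ (it needs $\log\log n \ge 1$, so the statement is implicitly for $n$ above a small constant), and confirm that the boundary value of $\sigma_0^2$ at the threshold is absorbed into the first regime. Second, pinning down the precise surviving constant requires executing the refined first-pull analysis so that the Case~3 coefficient is exactly $(2\sqrt{2\log(1/\delta)}+1)$. The conceptual content --- that a sufficiently concentrated prior prevents the optimistic index from ever over-exploring a genuinely suboptimal arm --- is already encoded in the sign of the right-hand side of \eqref{eq:action bound}, so the real work is just tracking constants through the two cases.
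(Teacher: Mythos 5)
Your proposal follows essentially the same route as the paper: the first regime is handled verbatim by the argument for \cref{thm:prior-dependent bayesucb}, and for the concentrated-prior regime the paper likewise observes that on the good event any taken action must satisfy $\Delta_a \leq 2\sqrt{2\sigma_0^2\log(1/\delta)} < 1/n$ (the paper sets $\varepsilon=0$ so the $+1$ comes from Case~1 rather than your Case~2, but this is the same bookkeeping), leaving only the Case~3 term, into which the bound on $\sigma_0$ is substituted exactly as you describe. The one loose end you flag --- deriving the Case~3 coefficient $2\sqrt{2\log(1/\delta)}+1$ rather than the $2(\sqrt{2\log(1/\delta)}+2K)$ appearing in \cref{thm:prior-dependent bayesucb} --- is also left unexecuted by the paper itself, which simply cites the earlier proof, so your proposal is faithful to (and no less complete than) the paper's argument.
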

\begin{proof}
The first claim is proved in \cref{sec:prior-dependent bayesucb proof}. The second claim can be proved as follows. Take \cref{thm:gap-dependent bayesucb}, set $\varepsilon = 0$, and consider the three cases in \cref{sec:gap-dependent bayesucb proof}.

\textbf{Case 1: Event $E_t$ occurs and the gap is large, $\Delta_{A_t} \geq \varepsilon$.} On event $E_t$, action $a$ can be taken only if
\begin{align*}
  \Delta_a
  \leq 2 \sqrt{\frac{2 \log(1 / \delta)}{\sigma_0^{-2} + \sigma^{-2} N_{t, a}}}
  \leq 2 \sqrt{2 \sigma_0^2 \log(1 / \delta)}
  \leq 2 \sqrt{\frac{1}{4 n^2 \log \log n}}
  < \frac{1}{n}\,.
\end{align*}
Therefore, the corresponding $n$-round regret is bounded by $1$.

\textbf{Case 2: The gap is small, $\Delta_{A_t} < \varepsilon$.} This case cannot happen because $\varepsilon = 0$.

\textbf{Case 3: Event $E_t$ does not occur.} The $n$-round regret is bounded by
\begin{align*}
  (2 \sqrt{2 \log(1 / \delta)} + 1) \sigma_0 K n \delta
  \leq \frac{2 \sqrt{2 \log(1 / \delta)} + 1}
  {\sqrt{8 \log(1 / \delta) \log \log n}} K \delta\,.
\end{align*}

This completes the proof.
\end{proof}

\section{Gap-Free Regret Bound of \bayesucb in Linear Bandit}
\label{sec:gap-free bound}

Let $E_t$ be the event in \eqref{eq:linear confidence interval}. Our proof has three parts.

\noindent\textbf{Case 1: Event $E_t$ occurs and the gap is large, $\Delta_{A_t} \geq \varepsilon$.} Then
\begin{align*}
  \Delta_{A_t}
  = A_*\T \theta - A_t\T \theta
  \leq A_*\T \theta - U_{t, A_*} + U_{t, A_t} - A_t\T \theta 
  \leq U_{t, A_t} - A_t\T \theta
  \leq 2 C_{t, A_t}\,.
\end{align*}
The first inequality holds because $U_{t, A_t} - U_{t, A_*} \geq 0$ by the design of \bayesucb. The second one uses that $A_*\T \theta - U_{t, A_*} \leq 0$. Specifically, for any action $a \in \cA$ on event $E_t$,
\begin{align*}
  a\T \theta - U_{t, a}
  = a\T (\theta - \hat{\theta}_t) - C_{t, a}
  \leq C_{t, a} - C_{t, a}
  = 0\,.
\end{align*}
The last inequality follows from the definition of event $E_t$. Specifically, for any action $a \in \cA$ on event $E_t$,
\begin{align*}
  U_{t, a} - a\T \theta
  = a\T (\hat{\theta}_t - \theta) + C_{t, a}
  \leq C_{t, a} + C_{t, a}
  = 2 C_{t, a}\,.
\end{align*}

\textbf{Cases 2 and 3} are bounded as in \cref{sec:linear bayesucb proof}. Now we chain all inequalities, add them over all rounds, and get
\begin{align*}
  R(n)
  & \leq 2 \E{\sum_{t = 1}^n \normw{A_t}{\hat{\Sigma}_t}} \sqrt{2 \log(1 / \delta)} +
  \varepsilon n + 4 L L_* K n \delta \\
  & \leq 2 \sqrt{\E{\sum_{t = 1}^n \normw{A_t}{\hat{\Sigma}_t}^2}} \sqrt{2 n \log(1 / \delta)} +
  \varepsilon n + 4 L L_* K n \delta\,,
\end{align*}
where the last inequality uses the Cauchy-Schwarz inequality and the concavity of the square root. Finally, the sum $\sum_{t = 1}^n \normw{A_t}{\hat{\Sigma}_t}^2$ is bounded using \cref{lem:posterior variances}. This completes the proof.

\section{Comparison of \bayesucb and \ucb}
\label{sec:bakeoff}

\begin{figure}[t]
  \centering
  \includegraphics[width=5.5in]{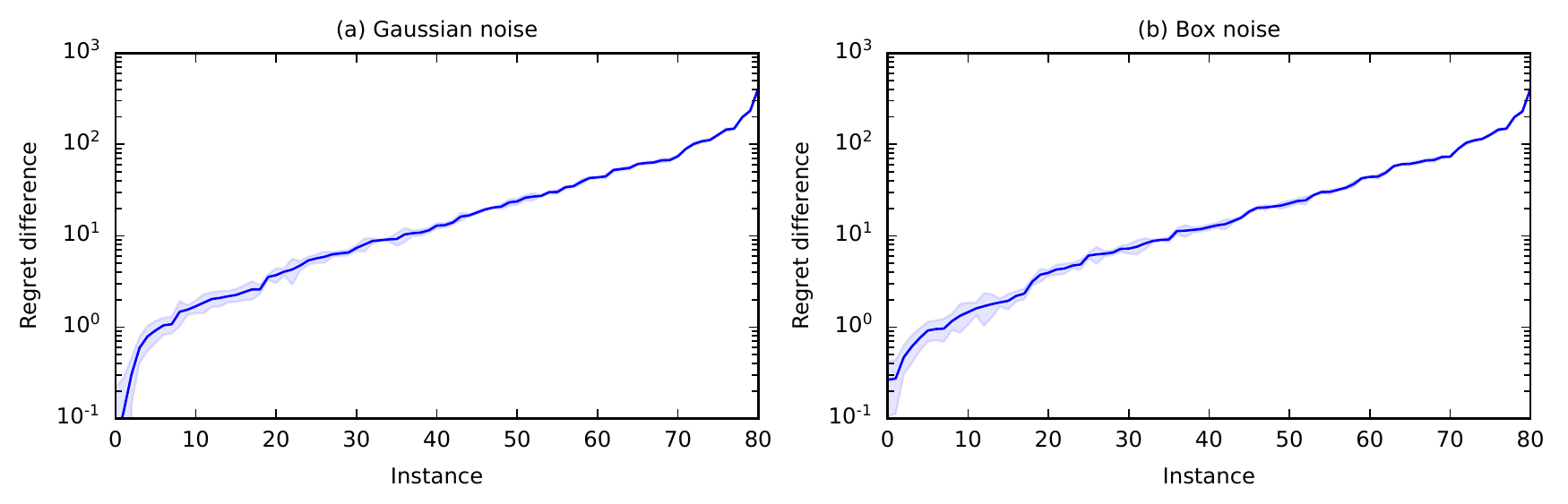}
  \caption{The difference in regret of \ucb and \bayesucb on $81$ Bayesian bandit instances, sorted by the difference. In plot (a), the noise is Gaussian $\cN(0, \sigma^2)$. In plot (b), the noise is $\sigma$ with probability $0.5$ and $- \sigma$ otherwise.}
  \label{fig:bakeoff}
\end{figure}

We report the difference in regret of \ucb and \bayesucb on $81$ Bayesian bandit instances. These instances are obtained by all combinations of $K \in \set{5, 10, 20}$ actions, reward noise $\sigma \in \set{0.5, 1, 2}$, prior gap $\Delta_0 \in \set{0.5, 1, 2}$, and prior width $\sigma_0 \in \set{0.5, 1, 2}$. The horizon is $n = 1\,000$ rounds and all results are averaged over $1\,000$ random runs.

Our results are reported in \cref{fig:bakeoff}. In \cref{fig:bakeoff}a, the noise is Gaussian $\cN(0, \sigma^2)$. In \cref{fig:bakeoff}b, the noise is $\sigma$ with probability $0.5$ and $- \sigma$ otherwise. Therefore, this noise is $\sigma^2$-sub-Gaussian, of the same magnitude as $\cN(0, \sigma^2)$ but far from it in terms of the distribution. This tests the robustness of \bayesucb to Gaussian posterior updates. \ucb only needs $\sigma^2$-sub-Gaussian noise. In both plots, and in all $81$ Bayesian bandit instances, \bayesucb has a lower regret than \ucb. It is also remarkably robust to noise misspecification, although we cannot prove it.

\section{Note on Information-Theory Bounds}
\label{sec:information-theory note}

Our approach could be used to derive Bayesian information-theory bounds \citep{russo16information}. The key step in these bounds, where the information-theory term $I_{t, a}$ for action $a$ at round $t$ arises, is $\Delta_{A_t} \leq \Gamma \sqrt{I_{t, A_t}}$, where $\Gamma$ is the highest possible ratio of regret to information gain. As in Case 1 in \cref{sec:linear bayesucb proof}, the $n$-round regret can be bounded as
\begin{align*}
  \sum_{t = 1}^n \Delta_{A_t}
  = \sum_{t = 1}^n \frac{1}{\Delta_{A_t}} \Delta_{A_t}^2
  \leq \frac{1}{\Delta_{\min}^\varepsilon} \sum_{t = 1}^n \Delta_{A_t}^2
  \leq \frac{\Gamma^2}{\Delta_{\min}^\varepsilon} \sum_{t = 1}^n I_{t, A_t}\,.
\end{align*}
The term $\sum_{t = 1}^n I_{t, A_t}$ can be bounded using a worst-case argument by a $O(\log n)$ bound.

\end{document}